\def\eqref#1{equation~(\ref{#1})}
\def\1{\bm{1}}
\def\rvz{{\mathbf{z}}}
\def\rmG{{\mathbf{G}}}
\def\rmH{{\mathbf{H}}}
\def\rmT{{\mathbf{T}}}
\def\vtheta{{\bm{\theta}}}
\def\vgamma{{\bm{\gamma}}}
\def\vGamma{{\bm{\Gamma}}}
\def\mI{{\bm{I}}}
\DeclareMathAlphabet{\mathsfit}{\encodingdefault}{\sfdefault}{m}{sl}
\SetMathAlphabet{\mathsfit}{bold}{\encodingdefault}{\sfdefault}{bx}{n}
\def\sR{{\mathbb{R}}}
\newcommand{\sigmoid}{\sigma}
\newcommand{\vtc}{\vtheta(t)}
\newcommand{\vtd}{\vtheta_t}
\newcommand{\ldsrc}{L^{\text{dsr}}(\vtc)}
\newcommand{\ldsrd}{L^{\text{dsr}}(\vtd)}
\newcommand{\simpldsrc}{L^{\text{dsr}}(t)}
\newcommand{\rmd}{\mathrm{d}}
\newcommand{\trn}{\text{trn}}
\newcommand{\dsr}{\text{dsr}}
\newcommand{\gnc}{\gamma_n(t)}
\newcommand{\gnd}{\gamma_{n,t}}
\newcommand{\Gnc}{\Gamma_n(t)}
\newcommand{\Gn}{\Gamma_n}
\newcommand{\CR}{\mathrm{CR}}
\newcommand{\SNR}{\mathrm{SIM}}
\newcommand{\con}{\mathrm{CT}}
\newcommand{\Const}{\mathrm{Const}}
\newcommand{\vga}{\vgamma}
\definecolor{linkColor}{rgb}{0.2,0.4,0.6}
\newtheorem{property}{Property}[section]
\theoremstyle{plain}
\newtheorem{theorem}{Theorem}[section]
\newtheorem{lemma}[theorem]{Lemma}
\theoremstyle{definition}
\theoremstyle{remark}
\newtheorem{remark}{Remark}
\title{Towards Optimal Learning of Language Models}
\author{%
Yuxian Gu$^{1,2}$\thanks{Contribution during an internship at Microsoft Research.~\small $\langle$\texttt{guyx21@mails.tsinghua.edu.cn}$\rangle$},~~~\ Li Dong$^2$,~~~\ Yaru Hao$^2$,~~~\ Qingxiu Dong$^2$,~~~\ Minlie Huang$^1$,~~~\ Furu Wei$^2$ \\
$^1$The CoAI Group, Tsinghua University \\
$^2$Microsoft Research \\
{\href{https://aka.ms/GeneralAI}{https://aka.ms/GeneralAI}} \\
}
\begin{document}

\maketitle

\vspace{-0.5cm}
\begin{abstract}
This work studies the general principles of improving the \textbf{learning} of language models (LMs), which aims at reducing the necessary training steps for achieving superior performance. Specifically, we present a theory for the \textit{optimal learning} of LMs. We first propose an objective that optimizes LM learning by maximizing the \textit{data compression ratio} in an ``LM-training-as-lossless-compression'' view. Then, we derive a theorem, named \textit{Learning Law}, to reveal the properties of the dynamics in the optimal learning process under our objective. The theorem is then validated by experiments on a linear classification and a real-world language modeling task. Finally, we empirically verify that the optimal learning of LMs essentially stems from the improvement of the coefficients in the scaling law of LMs, indicating great promise and significance for designing practical learning acceleration methods. Our code can be found at \url{https://aka.ms/LearningLaw}.
\end{abstract}


\begin{figure*}[ht]
\centering
\includegraphics[width=\textwidth]{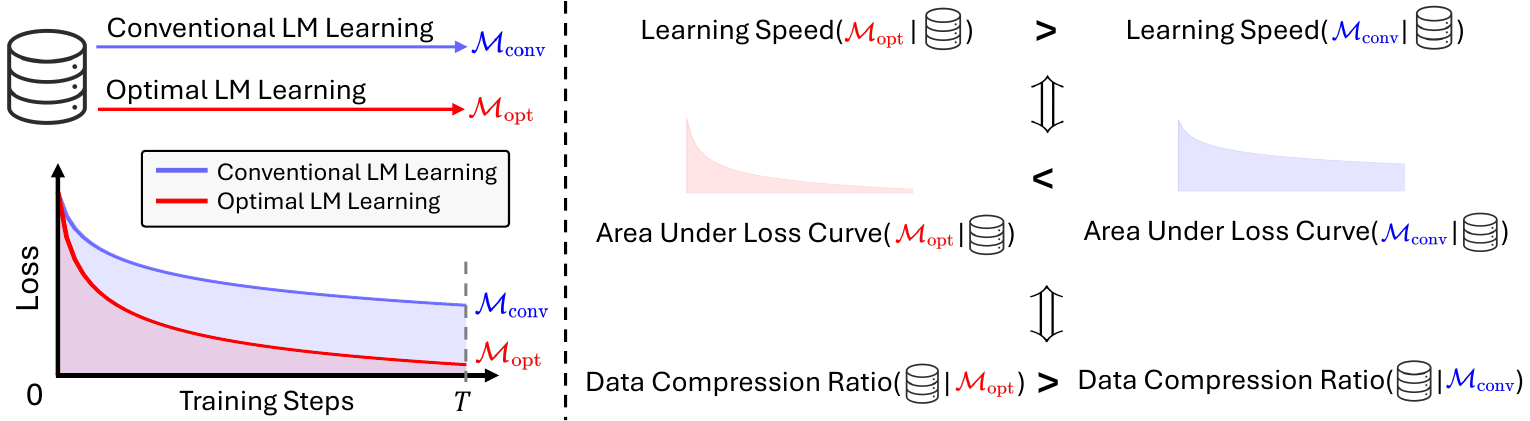}
\caption{Our objective is to minimize the area under loss curve, which is equivalent to maximizing the compression ratio of training corpus in the ``LM-training-as-lossless-compression'' view. A learning law is proposed to reveal the training dynamics of the above optimal learning.}
\label{fig:obj}
\end{figure*}

\begin{figure}[ht]
\begin{minipage}[ht]{0.44\textwidth}
\centering
\includegraphics[width=0.98\textwidth]{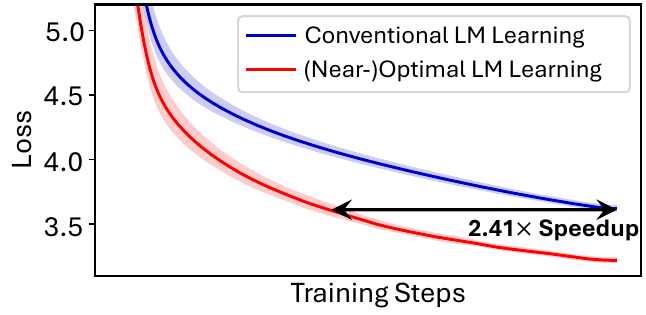}
\caption{Optimal learning gets the theoretical speedup upper bound of Transformer LM training on TinyStories corpus~\cite{tinystories}.}
\label{fig:exp}
\end{minipage}\hspace{0.5cm}
\begin{minipage}[ht]{0.51\textwidth}
\centering
\small
\vspace{0.3cm}
\begin{tabular}{l|cccccccc}
\toprule                 
\bf Scaling Laws  & $B$  & $\beta$  \\ \midrule 
Conventional LM Learning & $\text{3.16}\times \text{10}^\text{8}$ & 0.12  \\
(Near-)Optimal LM Learning & $\textbf{1.99} \bm{\times}\textbf{10}^{\textbf{7}}$ & \textbf{0.14}  \\
\bottomrule
\end{tabular}
\vspace{0.3cm}
\makeatletter\def\@captype{table}\makeatother\caption{The (near-)optimal LM learning improves the scaling laws~\cite{scaling_law} over conventional LM training. The coefficients $B,\beta$ are used to fit the loss curves in Figure \ref{fig:exp}, i.e., $\mathrm{Loss} = L_0 + \left( B/t \right)^{\beta}$ when $t>t_0$. See Section \ref{sec:scaling_law} for details.}
\label{tab:scale_coef}
\end{minipage}

\end{figure}

\section{Introduction}

With the thriving of language models (LMs;~\citealp{plmsurvey,foundation_model}), there is an increasing focus on improving the \textbf{learning}~\cite{optimal_learning,eighty_five} of LMs, which aims at accelerating the learning speed and achieving a certain model performance with as few training steps as possible~\cite{llm_eficiency_survey}. This focus helps humans explore the limits of LMs given the rapid growth of their computational demand~\cite{chinchila}, and promotes democratization~\cite{h2oGPT} of large language models (LLMs; \citealp{gpt3,chatgpt,gpt4,palm,palm2}), which is valuable for both research communities and industry sectors~\cite{llama,llama2,mistral}.

In this paper, we present a theory for optimal learning of LMs. Unlike prior works exploring practical acceleration methods at the model-level~\cite{ln_study,layer_drop}, optimizer-level~\cite{lamb,sophia}, or data-level~\cite{d4,semdedup,doremi}, our work demonstrates the principles of optimizing the LM learning speed, including the optimization objective, the property of optimal learning dynamics, and the essential improvement of the learning acceleration. 


Specifically, for the optimization objective, we propose to minimize the area under the loss curve (AUC;~\citealp{auc}), which has a clear physical significance: the \textit{description length} when we view the next-token-prediction LM training process as lossless compression of the training data~\cite{nncp,trm_text_compress,jack_rae_compression}. 
As shown in Figure \ref{fig:obj}, a learning process with the smallest loss AUC corresponds to the highest compression ratio. Simultaneously, the loss in this process also converges to a small value at the highest rate, given sufficiently large total training steps.
Therefore, we consider \textbf{optimizing LM learning equivalent to maximizing the corresponding compression ratio of the learning process}, and adopt the latter as the optimization objective in our theory. Similar objectives are also employed to interpret the remarkable generalization performance of recent LLMs~\cite{llmzip,lm_is_compression}.

We then derive a theorem, named \textit{Learning Law}, that characterizes the property of dynamics in the LM learning process that achieves the optimum of our objective. 
Here, a learning process is induced by a \textit{learning policy} that determines which data points the LM learns as the training progresses. In this way, we solve the optimal learning policy in the sense that the corresponding compression ratio is maximized, and obtain our Learning Law (see Theorem \ref{trm:main} for a formal expression):
\begin{tcolorbox}[title=Learning Law]
\bf
\textit{All examples have the same contribution to the LM in the optimal learning process.}
\end{tcolorbox}
\begin{figure*}[ht]
    \centering
        \includegraphics[width=0.96\textwidth]{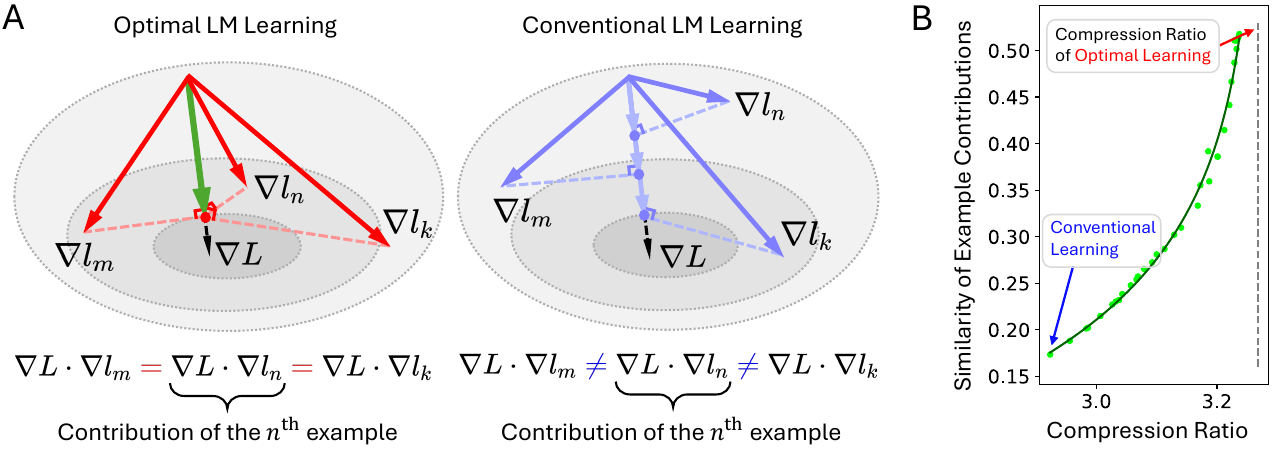}
    \caption{\textbf{A}: 3-D illustration of Learning Law (Theorem \ref{trm:main}). In the optimal learning process, all training examples should have the same contribution to LM learning, where the contribution is defined as the dot-product of the gradient on individual samples ($\nabla l_m$, $\nabla l_n$, and $\nabla l_k$) and the gradient of a desired loss ($\nabla L$). See Section \ref{sec:derive} for rigorous notation definitions. \textbf{B}: Experimental evidence of Learning Law. When LM learning approaches the optimum, the similarity of example contributions tends to $+\infty$, which means all examples have the same contribution to the LM.}
    \label{fig:learning law}
\end{figure*}
As shown in Figure \ref{fig:learning law}, the contribution of an example is defined as the dot-product of its gradient and the gradient of a desired loss\footnote{Note that the desired loss is not necessarily the same as the training loss as discussed in Section \ref{sec:prelim}.}
, which measures its influence on the LM in the desired learning direction. 
Learning Law also suggests a \textit{matching of local and global learning speed} in the optimal learning process, which interprets the optimal learning policy as a dynamic data re-weighting strategy that encourages the LM to learn highly contributive examples and simultaneously avoid over-fitting them. Similar mechanisms are also found critical to the best teaching methods for humans in psychological research~\cite{cog_teaching,goldilocks}.

We examine our theory by experiments on linear classification tasks based on Perceptron\footnote{In Appendix \ref{app:perceptron_as_comp}, we provide a lossless data compression view of the Perceptron training, indicating that our theory also applies.}~\cite{perceptron}
and real-world language modeling tasks based on Transformer~\cite{transformer}.
We first design a gradient-based method to search for the optimal learning policy under our objective. 
Then, we verify that the dynamics of the learning process induced by the found near-optimal policy aligns well with our Learning Law.
Finally, as shown in Table \ref{tab:scale_coef}, we provide empirical evidence showing that the near-optimal learning policy essentially improves the coefficients in the training step scaling law of LMs~\cite{scaling_law}, which leads to 5.50$\times$ and 2.41$\times$ speedup to Perceptron and Transformer learning, respectively.
This emphasizes the promise and significance of exploring more scalable methods to optimize the learning policy in practice and accelerate the training of LLMs.

\section{Problem Formulation}
\label{sec:prelim}

We consider LM training on a large-scale dataset with $N$ examples $\{x_n^\trn\}_{n=1}^N$ for a sufficiently large total training time steps $T$. Let $\vga_{n, t}$ denote the weight of the $n^{\text{th}}$ training example at the time step $t$,  a \textbf{learning policy} is represented by a time-variant distribution over $N$ training examples $\vga_t=\left[\gamma_{1,t}, \gamma_{2,t}, \cdots, \gnd\right]^\top$, satisfying $\sum_{n=1}^N\gnd=1$ and $\gnd\ge0$ for $1\le n \le N, 0\le t \le T-1$. The conventionally trained LM learns with a policy $\gamma^c_{n,t}=\frac{1}{N}$ (\textbf{conventional learning}). 
Recent works~\cite{tracin, tracein_lms} have shown that theories derived based on Gradient Decent (GD) offer insights into other gradient-based algorithms~\cite{adam}. Therefore, for simplicity, we assume the LM is trained with GD for $t=0,1,\cdots, T-1$:
\begin{equation}
\begin{aligned}
    L^\trn_t(\vtd) &= \sum^{N}_{n=1}\gnd l(x_n^\trn, \vtd), \\
    \vtheta_{t+1} &= \vtd - \eta \nabla L^\trn_t(\vtd),
    \label{eq:gd}
\end{aligned}
\end{equation}
where $\vtd\in \sR^D$ is the model parameters flattened into a $D$-dimensional vector at the time step $t$, $\eta$ is the learning rate, and $l(\cdot, \cdot)$ is the loss function of the learning problem. For LMs, $l(\cdot, \cdot)$ is typically the Maximum Likelihood Estimation (MLE) loss: $l(x, \vtd) = -\log p_{\vtd}(x)$, where $x$ is a text sequence. 
Following \cite{data-selection-IS} and \cite{rho-loss}, we focus on the learning speed reflected by the reduction rate of a desired loss $L^\dsr$ computed on $K$ examples $\{x_k^\dsr\}_{k=1}^K$ that \textit{do not necessarily} follow the same distribution as the training examples:
\begin{equation}
\vspace{-0.1cm}
\ldsrd = \frac{1}{K}\sum^{K}_{k=1} l(x^\dsr_k, \vtd).
\label{eq:tg}
\end{equation}
This formulation applies to a broad of practical scenarios including classical machine learning using a validation set to prevent over-fitting~\cite{nature_of_stat_learning}, large-scale pre-training relying on a carefully curated held-out corpus to evaluate generalization performance~\cite{scaling_law}, and domain adaptation where a natural difference exists between training and target distribution~\cite{data-selection-IS}. As such, we search for the learning policy $\vga_t$ that maximizes the reduction rate of $\ldsrd$ to optimize LM learning.

However, direct analysis of this optimization problem is difficult due to the discreteness of GD.
Therefore, we focus on the \textit{continuous limit} of GD by considering the corresponding gradient flow of Equation \ref{eq:gd} for $t\in [0,T]$, which is more amenable to theoretical analysis~\cite{reg_sgd}:
\begin{equation}
    \label{eq:continuous}
    \frac{\rmd }{\rmd t}\vtc = -\nabla L^\trn(\vtc, t) = -\nabla \sum^{N}_{n=1}\gamma_{n}(t)l(x_n^\trn, \vtc),
\end{equation}
where $\gnc$ is a smooth interpolation function of $\gnd$. According to the results in numerical analysis, GD defined in Equation \ref{eq:gd} is the \textit{Euler method} to approximately solve the initial value problem of the gradient flow in Equation \ref{eq:continuous}, and $\vtc \approx \vtd$ when $\eta$ is sufficiently small~\cite{continuous_v_s_discrete}. In Section \ref{sec:exp}, we show that the results derived from this limit align well with the experiments in discrete settings.

\section{Theory for Optimal Learning of LMs}
\label{sec:learning_law}
In this section, we present our theory in the continuous limit of GD. We first propose an objective for ``maximizing the reduction rate of $L^\dsr$ by optimizing the learning policy''. Then, we derive our main theorem, named \textit{Learning Law}, which introduces a necessary condition for the dynamics of the learning process induced by the policy that achieves the optimum of the objective.

\subsection{Objective: Maximizing Compression Ratio} 
We characterize the reduction rate of $L^\dsr$ with the area under the curve of $L^\dsr(\vtc)$ (AUC of $L^\dsr$) and minimize this area to achieve high learning speed:
\begin{equation}
\begin{aligned}
    \min_{\vga(t)} \ \ & \int_0^T L^\dsr(\vtheta_{\vga}(t)) \rmd t, \\
    \text{s.t.} \ \ & \sum_{n=1}^N \gnc = 1,\\
                      & \gnc \ge 0, n=1,2,\cdots,N,
\end{aligned}
\label{eq:obj}
\end{equation}
where $\vga(t)=\left[\gamma_1(t), \gamma_2(t), \cdots, \gnc\right]^\top$ and $\vtheta_{\vga}(t)$ is an alias of $\vtc$ satisfying Equation \ref{eq:continuous} to emphasize its dependency on $\vga(t)$. As shown in Figure \ref{fig:obj}, for sufficiently large $T$, a learning process with minimal loss AUC owns the highest loss reduction rate.
Interestingly, the AUC of $L^\dsr$ has a physical significance from the ``LM-training-as-lossless-compression'' view~\cite{jack_rae_compression}: \textbf{the resulting description length of compressing data drawn from the desired data distribution}.
Therefore, Equation \ref{eq:obj} is equivalent to maximizing the corresponding compression ratio. Note that unlike \cite{lm_is_compression} that studies encoding data using a well-trained LM, we view the entire LM training as a compression process. We provide more discussion of these two perspectives in Section \ref{sec:related_work}.
Besides, there are still slight differences between our statement and that in prior works viewing the training process as lossless compression~\cite{nncp,trm_text_compress,jack_rae_compression}: we consider \textit{the desired loss AUC of GD training for multiple epochs}, while the previous statement is about \textit{the training loss AUC with single-epoch SGD training}. More discussion about this difference can be found in Appendix \ref{app:our_compression}. 

\subsection{Learning Law}
\label{sec:derive}
Equation \ref{eq:obj} defines an Optimal Control problem that can be solved by \textit{Maximum Principle}~\cite{optimal_control}. However, we find the solution hard to interpret and verify in practical LM learning. Therefore, in this work, we derive a looser necessary condition for the optimum of Equation \ref{eq:obj}.
\begin{tcolorbox}
\begin{theorem}[Learning Law]
When an LM is trained with an optimal learning policy, which yields a learning process corresponding to a maximum compression ratio on the desired data distribution, the following condition holds for $0 < t \le T$ and any $m$, $n$ such that $\gamma_m(t) > 0$, $\gnc > 0$:
\begin{equation}
\begin{aligned}
    \nabla L \cdot \nabla l_m = \nabla L \cdot \nabla l_n = \Const,
    \label{eq:main}
\end{aligned}    
\end{equation}
where $\nabla L=\nabla L^{\mathrm{dsr}}(\vtc)=\nabla\frac{1}{K}\sum^{K}_{k=1} l(x^{\mathrm{dsr}}_k, \vtc)$, $\nabla l_m = \nabla l(x_m^{\mathrm{trn}}, \vtc)$, $\nabla l_n = \nabla l(x_n^{\mathrm{trn}}, \vtc)$, and $\cdot$ is dot-product. $\Const=-\frac{\rmd}{\rmd t}L^{\mathrm{dsr}}(\vtc) $ is the desired loss change rate over time and \textbf{is independent of $\bm{n}$ and $\bm{m}$}.
\label{trm:main}
\end{theorem}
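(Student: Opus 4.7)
The plan is to treat Equation~\ref{eq:obj} as an optimal control problem with state $\vtc\in\sR^D$, control $\vga(t)$ on the probability simplex, and dynamics given by the gradient flow of Equation~\ref{eq:continuous}, and to extract first-order necessary conditions for an optimal policy via Pontryagin's Maximum Principle.

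First, I would form the Hamiltonian
\[
H(\vtc,\vga(t),\vp(t),t) \;=\; \ldsrc \;-\; \sum_{n=1}^{N}\gnc\,\vp(t)\cdot\nabla l(x_n^{\trn},\vtc),
\]
introduce a multiplier $\mu(t)$ for $\sum_n\gnc=1$ and KKT multipliers $\nu_n(t)\ge 0$ for $\gnc\ge 0$, and write the standard necessary conditions: the state equation (reproducing Equation~\ref{eq:continuous}); the costate equation $\frac{\rmd}{\rmd t}\vp = -\nabla\ldsrc + \big(\sum_n\gnc\,\nabla^2 l_n\big)\vp$ with transversality $\vp(T)=\vzero$; and pointwise stationarity in $\vga(t)$. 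Because $H$ is linear in the control, stationarity combined with complementary slackness $\nu_n\gnc=0$ forces $\vp(t)\cdot\nabla l_n = \mu(t)$ for every $n$ with $\gnc>0$.

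Second, I would pass from this Pontryagin-style equality to the theorem's form by a localized variational perturbation: swap an amount $\epsilon$ of weight from example $m$ to example $n$ inside an infinitesimal window around a time $t_0$ and expand $\int_0^T \ldsrc\,\rmd t$ to first order in $\epsilon$. The leading contribution produced within the window is proportional to $\nabla L\cdot(\nabla l_n - \nabla l_m)$ evaluated at $t_0$, and requiring it to vanish for every admissible swap yields $\nabla L\cdot\nabla l_m = \nabla L\cdot\nabla l_n$ on the support of $\vga(t_0)$. The identification $\Const = -\frac{\rmd}{\rmd t}\ldsrc$ is then immediate: multiplying the common value by $\gnc$, summing over $n$, and using $\sum_n\gnc=1$ together with the chain rule gives $-\frac{\rmd}{\rmd t}\ldsrc = \nabla L\cdot\sum_n\gnc\nabla l_n = \Const$.

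The hard part is the second step. The full Pontryagin condition $\vp(t)\cdot\nabla l_n = \mu(t)$ depends on the entire future trajectory through the adjoint $\vp$, whereas the theorem replaces $\vp$ by the instantaneous $\nabla\ldsrc$; the two statements agree only when $\vp(t)$ is parallel to $\nabla\ldsrc$, which is not automatic. The proof must therefore argue that the restricted family of single-instant, two-example swaps already suffices to force Equation~\ref{eq:main} as a necessary condition at the global optimum — this is presumably the ``looser'' condition alluded to in the text. Once this restriction to a pointwise-in-$t$ variational class is justified, the remaining work (collecting first-order terms, verifying that the active set is preserved under small perturbations, and checking the chain-rule identity for $\Const$) is routine bookkeeping.
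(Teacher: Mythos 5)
Your first step is a sound Pontryagin setup, but the second step contains the same error it is trying to repair. A swap of weight $\epsilon$ from example $m$ to example $n$ in a window of width $\delta$ around $t_0$ produces a state perturbation $\Delta\vtheta(t_0+\delta)\approx -\epsilon\delta\bigl(\nabla l_n-\nabla l_m\bigr)$ that then propagates along the linearized flow for all $t>t_0$. The contribution of the window itself to $\int_0^T L^\dsr\,\rmd t$ is $O(\epsilon\delta^2)$, whereas the downstream contribution is $O(\epsilon\delta\cdot(T-t_0))$ and therefore dominates. Collecting it gives exactly
\begin{equation}
\Delta J \;\propto\; -\epsilon\delta\left[\int_{t_0}^T\Phi(t,t_0)^{\!\top}\nabla L^\dsr(\vtheta(t))\,\rmd t\right]\cdot\bigl(\nabla l_n-\nabla l_m\bigr)\Big|_{t_0},
\end{equation}
and the bracketed integral \emph{is} the adjoint $\vp(t_0)$ you defined with $\vp(T)=\vzero$. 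So the localized swap reproduces $\vp(t_0)\cdot\nabla l_n=\vp(t_0)\cdot\nabla l_m$, not $\nabla L\cdot\nabla l_n=\nabla L\cdot\nabla l_m$; the ``leading contribution within the window'' claim has the scaling backwards. You correctly flag the $\vp$-versus-$\nabla L^\dsr$ mismatch as the hard part, but the proposed variational class does not close it, because restricting to single-instant swaps cannot change the fact that their first-order effect is mediated by the costate.

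The paper takes a genuinely different route that sidesteps the costate. It changes variables to the accumulated weights $\Gnc=\int_0^t\gamma_n(t')\,\rmd t'$, writes the Lagrangian $\mathcal{L}=\ldsrc+\lambda(t)\bigl(\sum_n\dot\Gamma_n-1\bigr)+\sum_n\mu_n(t)\dot\Gamma_n$, and applies the Euler--Lagrange equation \emph{in $\Gamma_n$, $\dot\Gamma_n$}. Because the multipliers attach to $\dot\Gamma_n$, the E--L equation produces a time derivative $\dot\lambda(t)+\dot\mu_n(t)$ on one side; a short lemma ($\mu_n\in C^1$, $\mu_n\ge0$, $\mu_n(t)=0\Rightarrow\dot\mu_n(t)=0$) kills $\dot\mu_n$ on the active set, leaving $\partial\ldsrc/\partial\Gn=\dot\lambda(t)$ independent of $n$. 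The crucial move is then the identification $\partial\ldsrc/\partial\Gn=-\nabla L\cdot\nabla l_n$, which the paper obtains by an infinitesimal-in-$t$ expansion of $\frac{\rmd}{\rmd t}\ldsrc=-\sum_n\dot\Gamma_n\,\nabla L\cdot\nabla l_n$ --- i.e.\ it interprets the partial as the \emph{instantaneous} sensitivity of $\ldsrc$ at time $t$ to the increment of $\Gamma_n$ accrued at time $t$, not as the functional derivative of $\int\ldsrc\,\rmd t$ (which would again bring in $\vp$). That interpretation is where $\nabla L^\dsr(\vtc)$ enters in place of the costate. If you want to complete your proof along your chosen lines, you would need to explain why the costate can be replaced by the instantaneous gradient, or adopt the paper's $\Gamma_n$-based Lagrangian together with its local reading of $\partial\ldsrc/\partial\Gn$; the swap calculation as written will not get you there.
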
    
\end{tcolorbox}

To prove Theorem \ref{trm:main}, we apply the Euler-Lagrange (EL) equation~\cite{el_equation} and Karush–Kuhn–Tucker (KKT) conditions~\cite{kkt} to Equation \ref{eq:obj}, which results in the condition: $\nabla \ldsrc \cdot \nabla l(x_n^\trn, \vtc) = -\frac{\rmd}{\rmd t}\ldsrc$. A full proof is shown in Appendix \ref{app:derive}. 

$\nabla L \cdot \nabla l_n$ in Equation \ref{eq:main} represents the \textbf{contribution} of the training example $x_n^\trn$ to $\ldsrc$, which is maximized when the gradient on $x_n^\trn$ shares the same direction with the gradient of $\ldsrc$. We denote $\bm{\con_n(t) = \nabla L \cdot \nabla l_n = \nabla \ldsrc \cdot \nabla l(x_n^\trn, \vtc)}$ for convenience in the rest of the paper. Note that when the model is converged ($\nabla L^\trn(\vtc, t) \approx \bm{0}$), $\con_n(t)$ can be viewed as an approximation of the Influence Function~\cite{if} by setting the Hessian matrix of $L^\trn(\vtheta, t)$ at $\vtheta = \vtc$ to an identity matrix~\citep{tracin}. In essence, Equation \ref{eq:main} means $\con_n(t)$ equals a value independent of $n$. Since the zero-weight examples ($\gnc = 0$) are typically noisy (verified in Section \ref{sec:zero-weight}), Theorem \ref{trm:main} suggests that \textbf{all non-noisy examples should be identically contributive to the LM in the optimal learning process}. In the following, we provide more discussion of this theorem.

\subsection{Discussion}
\label{sec:disc}
\paragraph{Theorem \ref{trm:main} suggests a matching of the local and global learning.} Another interpretation of $\con_n(t)$ is the ``local learning speed'': how fast the LM learns the knowledge in $x_n^\trn$ that is helpful to reduce $L^\dsr$. This is because the dot-product operation in $\con_n(t)$ can be viewed as the projection of the individual loss descending velocity $\nabla l(x_n^\trn, \vtc)$ on the desired direction. Correspondingly, $\frac{\rmd}{\rmd t} \ldsrc$ represents the LM's ``global learning speed'': how fast the LM gets better by learning all individual $x_n^\trn$. As a result, $\con_n(t) = \Const = -\frac{\rmd}{\rmd t}\ldsrc$ in Theorem \ref{trm:main} indicates that the local learning speed should match the global learning speed in the optimal learning process. 

\paragraph{The optimal learning policy establishes a dynamic data re-weighting strategy.} Generally, as the learning of LM progresses, $\con_n(t)$ drops because the gradient norm on each example $||\nabla l(x_n^\trn, \vtc)||$ decreases as the LM fits $x_n^\trn$. In addition, the direction of $\nabla l(x_n^\trn, \vtc)$ diverges from $\nabla  \ldsrc$ due to the possible discrepancy between the distribution of $x_n^\trn$ and $x_k^\dsr$, which also contributes to the decrease of $\con_n(t)$. Therefore, Theorem \ref{trm:main} guarantees that \textit{highly contributive example $x_n^\trn$ with high $\con_n(t)$ obtains large weights for training}, in order to reduce $\con_n(t)$ to meet the value of other examples. On the other hand, Theorem \ref{trm:main} also ensures that \textit{the weights of $x_n^\trn$ are lowered before the LM over-fits it} because $\con_n(t)$ should not be too small to match the global learning speed. Altogether, this forms a dynamic training data re-weighting strategy, which is intuitively essential for the optimal learning policy that maximizes the learning speed of an LM.

\paragraph{Theorem \ref{trm:main} is a necessary condition for the optimal learning dynamics.} This is because the E-L equation and KKT conditions are necessary conditions for the global optimum when the optimization problem is non-convex. Therefore, a learning process satisfying Theorem \ref{trm:main} is not guaranteed optimal. For example, by setting $\gamma_1(t)=1$ and $\gamma_2(t)=\gamma_3(t)=\cdots=\gamma_N(t)=0$, Equation \ref{eq:main} is satisfied, regardless of the values of $\con_n(t)$. This learning policy corresponds to using SGD with mini-batch size = 1, which is unlikely to be the optimal~\cite{oai_bs_study}. Therefore, searching for the optimal policy according to Theorem \ref{trm:main} may need regularization terms in practice, which we leave for future work to explore.

\section{Experiments}
\label{sec:exp}

We conduct experiments in the discrete setting of Equation \ref{eq:gd}, where the conclusions derived from the continuous limits in Section \ref{sec:learning_law} are still applicable when $\eta$ is sufficiently small~\cite{continuous_v_s_discrete}. We first design a method to find the optimal learning policy $\vga_t \in \sR^{N}$ for $0 \le t \le T-1$, by explicitly minimizing the AUC of $\ldsrd$ in the discrete setting, which maximizes the corresponding compression ratio of data drawn from the desired distribution. Then we examine our Learning Law (Theorem \ref{trm:main}) on the learning process induced by the found policies. Finally, we empirically verify that maximizing the compression ratio essentially improves the scaling law coefficients~\cite{scaling_law}, indicating the practical significance and promise of our theory.

\subsection{Finding the Optimal Learning Policy}
\label{sec:method}
To find the optimal $\vga_t$, we directly solve the discrete version of the optimization problem defined in Equation \ref{eq:obj} with a Proximal Gradient Method~\cite{prox_gd}:
\begin{equation}
    \begin{aligned}
        J(\vga) &= \sum_{t=1}^T \ldsrd, \\
        \vga_t &\leftarrow \operatorname{Proj}\left[\vga_t - \epsilon \nabla_{\vga_t} J(\vga)\right], \ 0\le t \le T-1,
    \end{aligned}
    \label{eq:method}
\end{equation}
where $J(\vga)$ is a discrete approximation of the integral in Equation \ref{eq:obj},  $\epsilon$ is the learning rate and $\operatorname{Proj}[\cdot]$ projects a point in $\sR^N$ to the $N$-simplex, ensuring that $\vga_t$ is a probability distribution over $N$ training examples. The optimization process can be implemented efficiently using dynamic programming and Jacobian-Vector-Product in PyTorch~\cite{pytorch}, which is described in detail in Appendix \ref{app:policy_opt}.

\begin{figure*}[t]
	\centering
	\subfigure[Perceptron Linear Classification]{
		\includegraphics[width=0.46\textwidth]{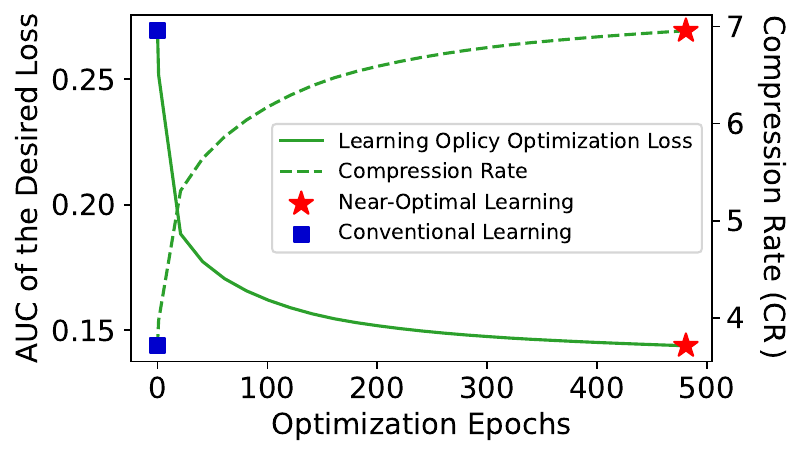} 
        \label{fig:exp_opt_pcn}
	}
 	\subfigure[Transformer Language Modeling]{
            \includegraphics[width=0.46\textwidth]{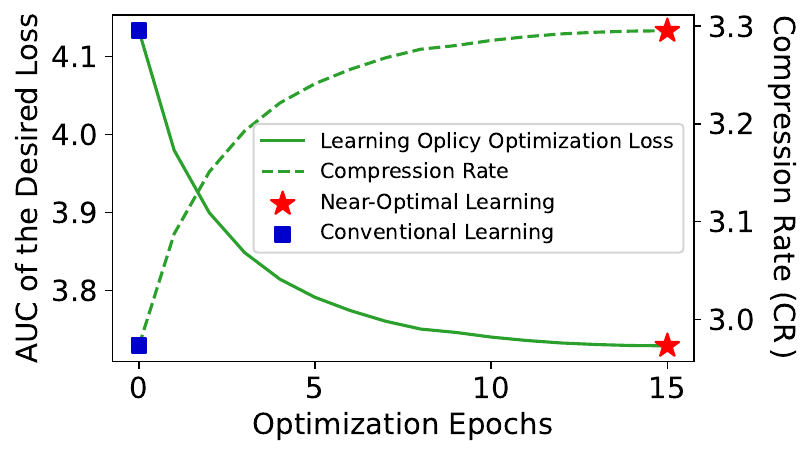} 
        \label{fig:exp_opt_trm}
	}
	\caption{Learning policy optimization results in Perceptron linear classification \textbf{(a)} and Transformer language modeling tasks \textbf{(b)}. We plot the learning policy optimization loss $J(\gamma)$ (solid lines), defined in Equation \ref{eq:method}, which represents the area under the curve (AUC) of the desired Perceptron or Transformer loss. We also show the corresponding compression ratio of the training process (dashed lines) in an "LM-as-Lossless-Compression" view. The optimization starts from conventional learning and smoothly converges to near-optimal learning with low loss AUC and high comprehension rate.}
    \label{fig:exp_opt}
\end{figure*}

\begin{figure*}[t]
	\centering
        \subfigure[Perceptron Linear Classification]{
   		 \includegraphics[width=0.46\textwidth]{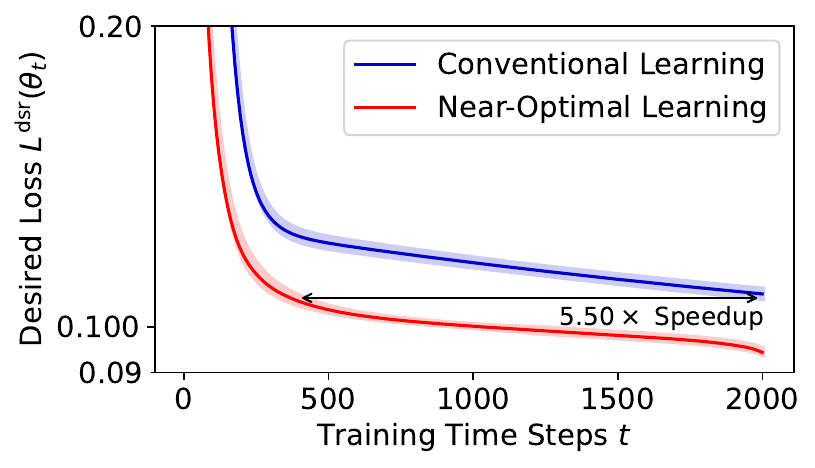}
        \label{fig:exp_train_pcn}
        }
        \subfigure[Transformer Language Modeling]{
		 \includegraphics[width=0.44\textwidth]{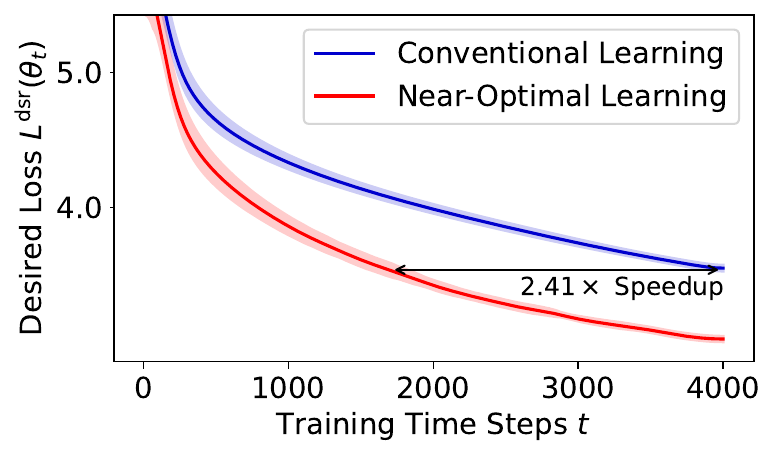}
        \label{fig:exp_train_trm}
        }
	\caption{Curves of the desired loss $\ldsrd$ when the model is trained using the conventional and the near-optimal learning policy. The near-optimal learning process achieves $5.50\times$ speedup in Perceptron linear classification \textbf{(a)} and $2.41\times$ speedup in Transformer language modeling \textbf{(b)}.}
    \label{fig:exp_train}
\end{figure*}

\subsection{Experimental Setup}
\label{sec:exp_setup}
We conduct experiments on a linear classification task based on Perceptron~\cite{perceptron} and a language modeling task based on Transformer~\cite{transformer}. See Appendix \ref{app:hp} for hyper-parameter configurations.

\paragraph{Perceptron Linear Classification.} We adopt a teacher-student setting~\cite{stat_mech} where each example $x_n=(\rvz_n, y_n)$ is a pair of $D$-dimensional vector $\rvz_n\in \sR^D$ drawn i.i.d. from Gaussian distribution, and a scalar $y_n = \operatorname{sign}(\rmT \cdot \rvz_n)$ given the ground truth weight $\rmT \in \sR^D$. We introduce a shift between the training and the desired data distribution to reflect their differences. 
The data are learned by an one-layer Perception parameterized by $\vtheta\in \sR^D$: $o_n=\sigmoid (\vtheta \cdot \rvz_n) = \frac{1}{1+\exp (-\vtheta \cdot \rvz_n)}$, which is trained with Maximum Likelihood Estimation (MLE) loss $l(x_n, \vtheta) = -\log o_n^{y_n}(1-o_n)^{1-y_n}$. In Appendix \ref{app:perceptron_as_comp}, we show that Perceptron can be viewed as a one-step LM, which means our theory still applies.

\paragraph{Transformer Language Modeling.} Considering the computation cost of the optimal policy searching, we adopt a two-layer Transformer with about 1.7M parameters and train it on TinyStories~\cite{tinystories}, a high-quality pre-training corpus. We add perturbations to the training examples (see Appendix \ref{app:hp} for details), which mimics the relatively low quality of the pre-training corpus in practice. Since our theoretical derivation is generally applicable, we believe that our theory also applies to larger LMs.

To migrate the risk of over-fitting the $K$ examples used to compute $\ldsrd$ in Section \ref{sec:method}, we additionally construct a held-out test set with $K$ examples from the desired data distribution in both Perceptron linear classification and Transformer language modeling experiments. In the following, we \textbf{compute and report the evaluation metrics by treating the test examples, unseen during the policy optimization, as $\bm{x_k^\dsr}$ in Equation \ref{eq:tg}}.

\subsection{Learning Policy Optimization Results}

\paragraph{A near-optimal learning policy can be found with the method in Section \ref{sec:method}.} In Figure \ref{fig:exp_opt}, we show the optimization process of finding the optimal learning policy. We plot the learning policy optimization loss $J(\vga)$, which is also the AUC of $\ldsrd$ in the learning process induced by $\vga_t$, and the corresponding compression ratio $\CR = \frac{T\log|V|}{\sum_{t=1}^T \ldsrd}$, where $V$ is the size of the label / vocabulary space for Perceptron / transformer (see Appendix \ref{app:orig_compression} for more explanation). The curve of $J(\vga)$ is smooth and almost converges at the end, indicating that a near-optimal learning policy is found. 

\paragraph{The near-optimal learning policy yields a high acceleration ratio of the learning speed.} In Figure \ref{fig:exp_train}, we plot the curve of $\ldsrd$ when the Perceptron and Transformer are trained under the conventional and near-optimal learning policies. The near-optimal policies significantly improve the loss AUC, bringing about acceleration $5.50\times$ and $2.41\times$ at the end of the Perceptron and Transformer training, respectively. Note that all reported metrics are computed on the test set unseen during the policy optimization, suggesting that the near-optimal policy does not over-fit the specific examples used to compute $\ldsrd$ but helps the model learn faster on the desired distribution.

\begin{figure*}[t]
	\centering
	\subfigure[Perceptron Linear Classification]{
		\begin{minipage}[b]{0.48\textwidth}
			\includegraphics[width=0.98\textwidth]{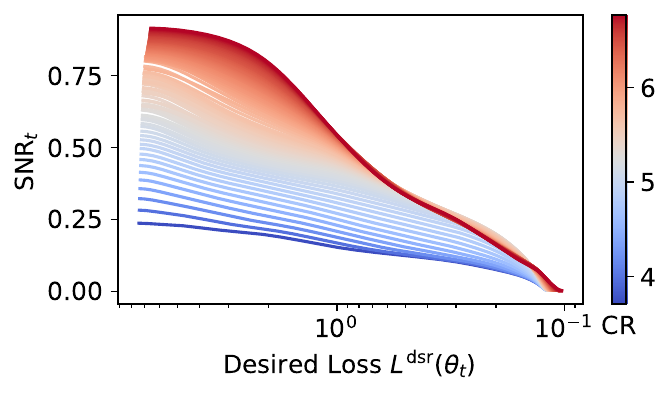} 
            \label{fig:exp_main_pcn}
		\end{minipage}
	}
    	\subfigure[Transformer Language Modeling]{
    		\begin{minipage}[b]{0.48\textwidth}
   		 	\includegraphics[width=0.98\textwidth]{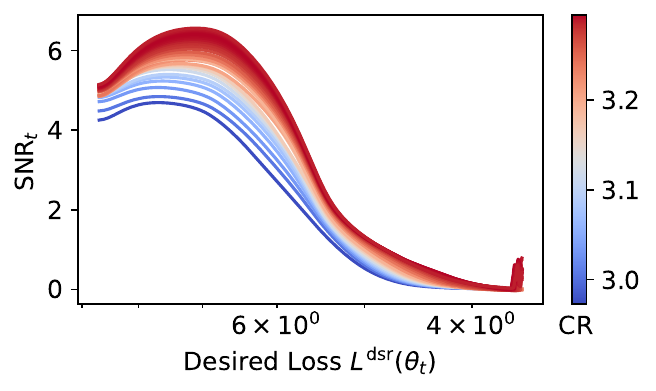}
            \label{fig:exp_main_trm}
    	\end{minipage}
    	}
        \caption{Empirical evidence of our Learning Law (Theorem \ref{trm:main}) in Perceptron linear classification \textbf{(a)} and Transformer language modeling \textbf{(b)} tasks. We measure the degree of similarity in contribution among different samples by $\SNR_t$, the \textit{Signal-Noise-Ratio} of the contribution $\con_{n,t}$ of training examples, calculated as the mean divided by the standard deviation of $\con_{n,t}$ across examples (Equation \ref{eq:snr}). Higher $\SNR_t$ means better contribution similarity. 
        We plot $\SNR_t$ with respect to the desired loss $\ldsrd$ under different learning processes. Each line is a certain learning process, whose color means the corresponding compression ratio ($\CR$). Runs with higher $\CR$ generally get higher $\SNR_t$ throughout learning, indicating that the example contributions are more similar to each other in a learning process closer to the optimum, which is in line with our Learning Law (Theorem \ref{trm:main}).}
        \label{fig:exp_main}
\end{figure*}

\begin{figure}[t]
	\centering
        \vspace{-0.2cm}
	\subfigure[Perceptron Linear Classification]{
		\begin{minipage}[b]{0.46\textwidth}
                \centering
			\includegraphics[width=0.70\textwidth]{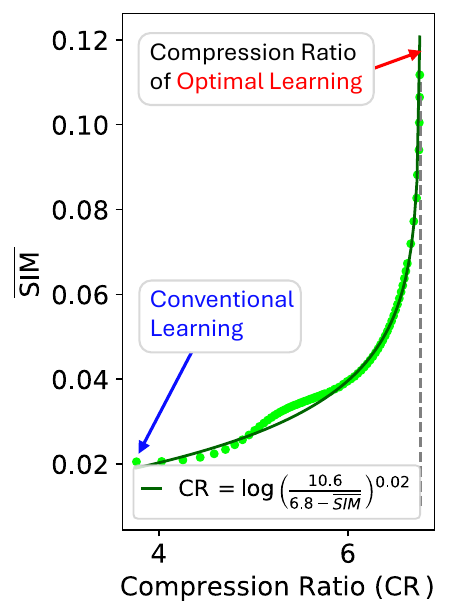} 
            \label{fig:exp_cr_snr_linear}
		\end{minipage}
	}
    \subfigure[Transformer Language Modeling]{
    		\begin{minipage}[b]{0.46\textwidth}
                \centering
   		 	\includegraphics[width=0.70\textwidth]{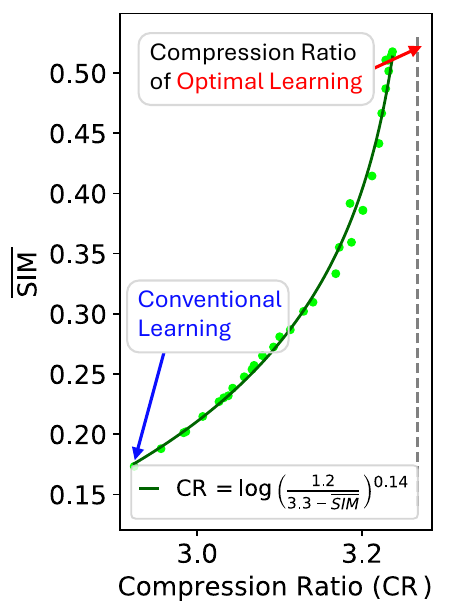}
            \label{fig:exp_cr_snt_trm}
    	\end{minipage}
    	}
        \caption{Empirical evidence of the Learning Law (Theorem \ref{trm:main}) in Perceptron linear classification \textbf{(a)} and Transformer language modeling \textbf{(b)} tasks. Following Figure \ref{fig:exp_main}, we consider $\overline{\SNR} = \frac{1}{T}\sum_{t=1}^T \SNR_t$, which summarizes the similarity of the training example contributions in a learning process.
        We plot the relationship between $\overline{\SNR}$ and $\CR$, and observe an evident tendency that $\overline{\SNR} \rightarrow +\infty$ when $\CR$ approaches a certain value, which can be fit by $\overline{\SNR}=\log \left(\frac{a}{b-\CR}\right)^c$. When the learning process approaches the optimum ($\CR\to b$), the standard deviations of training example contributions should be zero to allow $\overline{\SNR}\to +\infty$. This verifies Learning Law (Theorem \ref{trm:main}) that all training examples have the same contribution to the model in optimal learning.}
	\label{fig:exp_cr_snr}
    \vspace{-0.1cm}
\end{figure}

\subsection{Direct Verification of Learning Law (Theorem \ref{trm:main})}
\label{sec:exp_main}

We examine the similarity between $\con_{n,t}$ which is the discrete version of the individual sample contribution $\con_{n}(t)$ in a certain learning policy and satisfies $\con_{n,t}=\con_{n}(t)$ for $t=1,2,\cdots, T$. The similarity ($\SNR$) is measured by the \textit{Signal-Noise-Ratio} of $\con_{n,t}$:
\begin{equation}
    \label{eq:snr}
    \SNR_t = \frac{\overline{\con}_t}{s_{\con, t}},
\end{equation}
where $\overline{\con}_t = \sum_{n=1}^N\gnd \con_{n,t}$ is the weighted mean and $s_{\con,t} = \sqrt{\frac{\sum_{n=1}^N\mathbbm{1}\left[\gnd\ne 0\right]\left(\con_{n,t} - \overline{\con}_t\right)^2}{\sum_{n=1}^N\mathbbm{1}\left[\gnd\ne 0\right]-1}}$ is the standard deviation of $\con_{n,t}$ for training examples with non-zero weight.
The higher $\SNR_t$ means that the training examples have more similar $\con_{n,t}$. Note that $\SNR_t$ is dimensionless, which avoids the impact of the absolute value scale change of $\con_{n,t}$ during learning. We also consider $\overline{\SNR}= \frac{1}{T}\sum_{t=1}^T \SNR_t$, which summarizes the similarities of $\con_{n,t}$ throughout the learning process.

\paragraph{Higher compression ratio correlates with higher sample contribution similarities.} In Figure \ref{fig:exp_main}, we examine the value of $\SNR_t$ in the learning process induced by each policy found along the optimization process of $\vga_t$. Since the found policies bring about faster convergence, we plot $\SNR_t$ with respect to $\ldsrd$, rather than $t$. In this way, $\SNR_t$ are compared at the same ``stage'' of the model learning, migrating the impact of different convergence speeds. Figure \ref{fig:exp_main} demonstrates that the learning process with a higher compression ratio ($\CR$) generally keeps higher $\SNR_t$ in model learning, indicating that the contributions $\con_{n,t}$ of individual samples are more similar to each other throughout the learning process, which aligns with our Learning Law (Theorem \ref{trm:main}).


\paragraph{Sample contributions tend to be equal when the learning process approaches the optimum.} In Figure \ref{fig:exp_cr_snr}, we plot $\overline{\SNR}$ with respect to $\CR$ for each learning process. We observe an evident tendency that $\overline{\SNR} \rightarrow +\infty$ when $\CR$ approaches a certain value. Accordingly, we use the function $\overline{\SNR}=\log \left(\frac{a}{b-\CR}\right)^c$ to fit the tendency of the experimental observations.
Figure \ref{fig:exp_cr_snr} indicates that when the learning process continuously improves until the optimum ($\CR \rightarrow b$), the standard deviation of $\con_{n,t}$ should be zero to allow $\overline{\SNR} \rightarrow +\infty$. This verifies Learning Law (Theorem \ref{trm:main}) that the contributions of non-zero-weight training samples ($\con_{n,t}$) are identical in optimal learning.

\subsection{Properties of Zero-Weight Examples}
\label{sec:zero-weight}

\begin{figure*}[t]
	\centering
        \subfigure[Perceptron Linear Classification]{
		\begin{minipage}[b]{0.48\textwidth}
                \centering
			\includegraphics[width=0.86\textwidth]{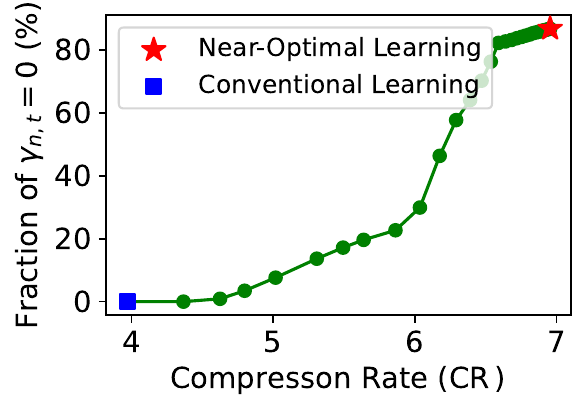}
            \label{fig:exp_prop1_pcn}
		\end{minipage}
	}
         \subfigure[Transformer Langnauge Modeling]{
            \begin{minipage}[b]{0.48\textwidth}
            \centering
            \includegraphics[width=0.86\textwidth]{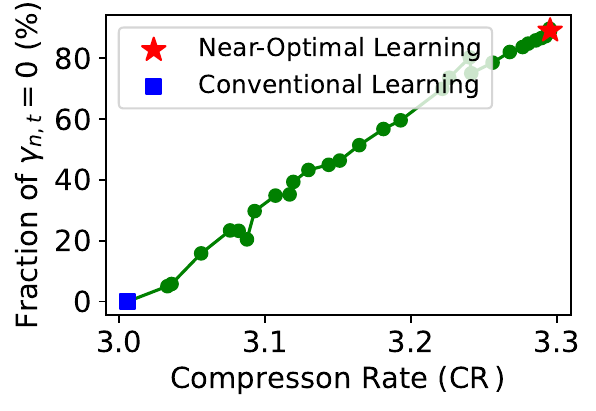}
            \label{fig:exp_prop1_trm}
            \end{minipage}
        }
        \caption{Empirical evidence of Property \ref{cor:noise}: non-contributive and noisy examples are excluded in optimal learning. The y-axis is the fraction of zero-weight examples among those with $\con_{n,t} \le 0$ at the same time step. Each point represents a learning policy, which tends to assign the example weight $\gnd=0$ to 100\% of noisy and non-contributive data when it approaches the optimum. }
        \label{fig:exp_prop1}
\end{figure*}

\begin{figure*}[t]
\centering
\includegraphics[width=0.5\textwidth]{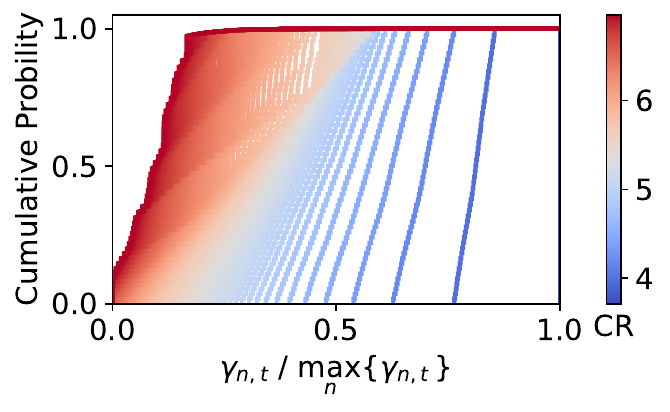}
\caption{Empirical evidence of Property \ref{cor:pft}: perfectly learned examples are ignored in optimal learning. We plot the cumulative distribution function (CDF) of the example weights $\gnd$ that satisfies $l(x_n^\trn, \vtd) < 1\times 10^{-6}$. Each line corresponds to a learning process. A large fraction of low-loss examples (perfectly learned) in the near-optimal learning obtain small $\gnd$ values (ignored), and this tendency becomes more evident when the learning approaches its optimum ($\CR$ increases).}
\label{fig:exp_prop2}
\vspace{-0.1cm}
\end{figure*}

\begin{figure*}[t]
	\centering
        \subfigure[Perceptron Linear Classification]{
		\begin{minipage}[b]{0.46\textwidth}
			\includegraphics[width=0.98\textwidth]{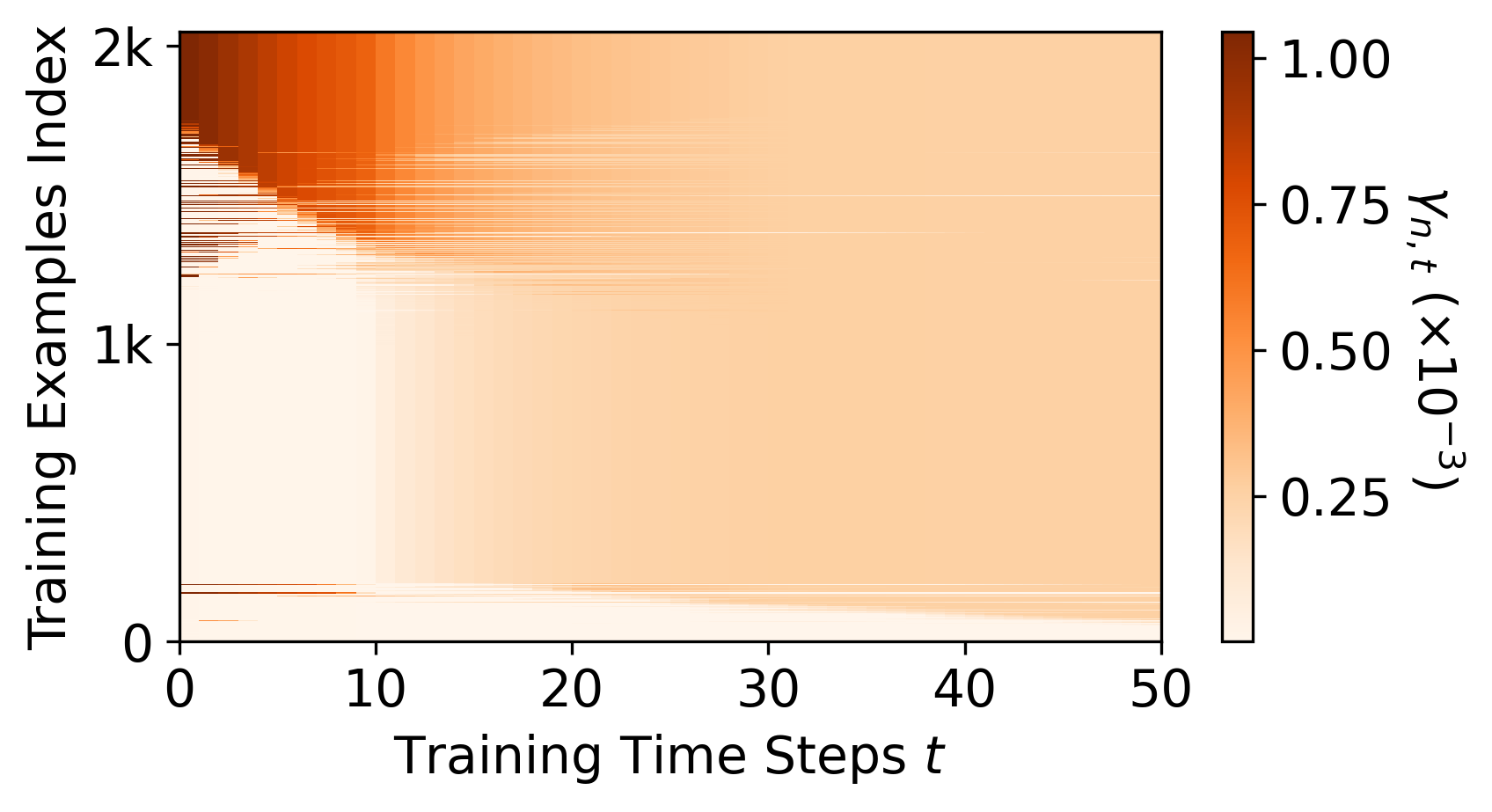}
            \label{fig:exp_prop3_pcn}
		\end{minipage}
	}
         \subfigure[Transformer Language Modeling]{
            \begin{minipage}[b]{0.46\textwidth}
            \includegraphics[width=0.98\textwidth]{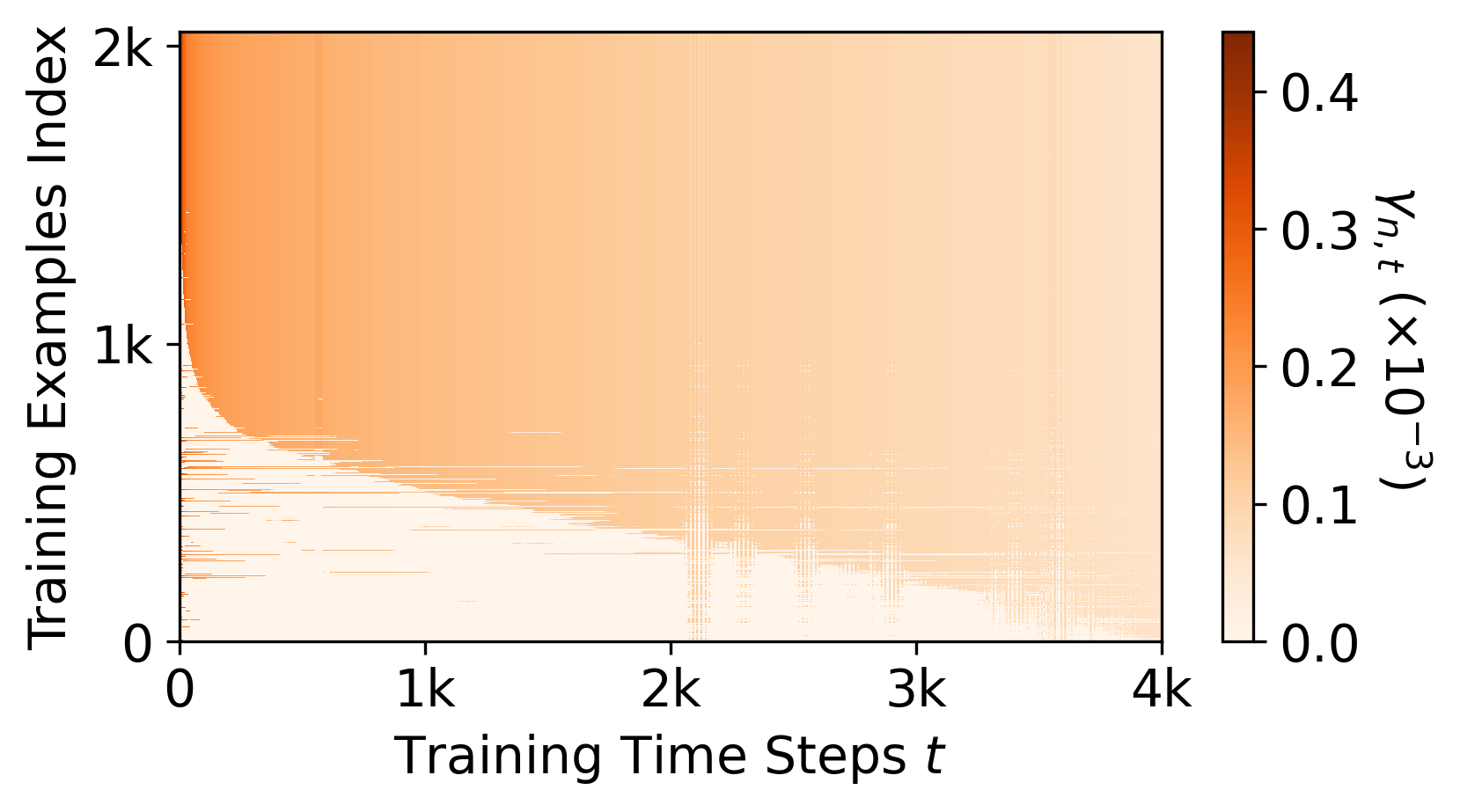}
            \label{fig:exp_prop3_trm}
            \end{minipage}
        }
        \caption{Empirical evidence of Property \ref{cor:redd}: redundant training examples are discarded in optimal learning. We randomly sample 2048 training examples satisfying $\con_{n,t}>0$ (contributive and unlearned examples) throughout the near-optimal learning process and show the dynamics of the example weight $\gnd$ (represented by the color in (a) and (b)). Since Perceptron converges quickly, we only plot its $\gnd$ dynamics for $t \le 50$. The near-optimal policies assign $\gnd=0$ to redundant examples in addition to the perfectly learned and non-contributive data points.
        }
        \label{fig:exp_prop3}
\end{figure*}

The experiments in Section \ref{sec:exp_main} mostly focus on the non-zero-weight examples. In this section, we provide more empirical evidence for Learning Law (Theorem \ref{trm:main}) by examining the properties of the examples with $\gnd=0$. We derive three properties of the optimal learning dynamics from Theorem \ref{trm:main} and then verify them through experiments. \textbf{The first property} guarantees that examples with non-positive contributions receive $\gnd=0$, indicating that the ``noisy'' examples at each time step are excluded by the optimal learning policy:
\begin{property}
    \label{cor:noise}
     The training example $x_n^\trn$ whose $\con_{n,t} \le 0$ gets $\gnd=0$ before the model converges.
\end{property}
\begin{proof}
    Before convergence, $\frac{\rmd \ldsrc}{\rmd t} < 0$ holds, indicating $\con_{n,t} > 0$ for $x_n^\trn$ that satisfies $\gnd>0$, according to Theorem \ref{trm:main}. Therefore, $\con_{n,t} \le 0 \Rightarrow \gnd=0$.
\end{proof}
\textit{Empirical Evidence.} We calculate the fraction of zero-weight examples ($\gnd=0$) among all examples with non-positive contributions at $t$ ($\con_{n,t} \le 0$): $\frac{\sum_{n,t} \mathbbm{1}[\gnd=0]\mathbbm{1}[\con_{n,t} \le 0]}{\sum_{n,t} \mathbbm{1}[\con_{n,t} \le 0]}$ and plot this fraction with respect to the $\CR$ value of the corresponding learning process in Figure \ref{fig:exp_prop1}.
We can see that when the learning process approaches the optimum, the fraction tends to 100\%, indicating that the non-contributive examples are discarded.

\vspace{0.2cm}

\textbf{The second property} is derived only for Perceptron linear classification, which indicates that the optimal learning policy will ignore those perfectly learned training examples:
\begin{property}
For Perceptrons, the perfectly learned $x^{\mathrm{trn}}_n$, whose margin $(2y^{\mathrm{trn}}_n-1)\vtd \cdot \rvz^{\mathrm{trn}}_n \rightarrow +\infty$ at the time step $t$, gets $\gnd = 0$ in the optimal learning policy when the model is yet converged.
\label{cor:pft}
\end{property}
\vspace{-0.3cm}
\begin{proof}
When $(2y^\trn_n-1)\vtheta_t \cdot \rvz^\trn_n \rightarrow +\infty$, we have $o^\trn_n-y^\trn_n \rightarrow 0$, which means $\nabla l(x^\trn_n, \vtheta_t) = (o^\trn_n-y^\trn_n)\rvz^\trn_n \rightarrow \bm{0}$ and $\con_{n,t} \rightarrow 0$. 
Assuming $\gnd \ne 0$, according to Theorem \ref{trm:main}, we have $\con_{n,t} =-\frac{\rmd}{\rmd t}\ldsrc$ in the optimal learning process, which means that $\left|\frac{\rmd}{\rmd t}\ldsrc\right|$ should be arbitrarily small. This does not hold when the model is not converged. Therefore, we have $\gnd = 0$.
\end{proof}
\vspace{-0.2cm}
\textit{Empirical Evidence.} In Figure \ref{fig:exp_prop2}, we plot the cumulative probability distribution function of $\frac{\gnd}{\max_n\left\{\gnd\right\}}$ for the well-learned Perceptron training examples $x^\trn_n$ with near-zero per-instance training loss: $l(x_n^\trn, \vtheta) < 1\times 10^{-6}$. Figure \ref{fig:exp_prop2} shows that for the near-optimal policy, more than 90\% well-learned examples have relatively low $\gnd$ (< 0.2 $\max_n\left\{\gnd\right\}$). 
This trend becomes more evident as the learning policy approaches the optimum ($\CR$ increases), which verifies Property \ref{cor:pft}. 

\vspace{0.4cm}

\textbf{The third property} suggests that the optimal learning policy will discard the ``redundant'' training examples. Although this property is derived from Perceptron linear classification, we empirically find that it also applies to Transformer language modeling. We call a set $\{x_n\}_{n=1}^N$ has ``redundant'' examples when the example inputs in the set are linearly correlated, i.e., there exist $K$ scalars $\{\alpha_n\}_{n=1}^N$, not all zero, such that $\sum_{n=1}^N\alpha_n\rvz_n=\bm{0}$.
\begin{property}
For Perceptrons, if the training set $\{x^{\mathrm{trn}}_{n}\}_{n=1}^N$ has redundant examples, with probability 1, at least one example $x^{\mathrm{trn}}_i$ gets $\gamma_{i,t} = 0$ at the time step $t$ when the model is yet converged in the optimal learning process. 
\label{cor:redd}
\end{property}
\vspace{-0.2cm}
\begin{proof}
Given that $\{x^\trn_n\}_{n=1}^N$ has redundant examples, there exist scalars $\{\alpha_n\}_{n=1}^N$, not all zero, such that $\sum_{n=1}^N\alpha_n\rvz^\trn_{n}=\bm{0}$, which means $\sum_{n=1}^N\frac{\alpha_n}{o^\trn_n-y^\trn_n}\con_{n,t}=0$.
Assuming $\forall 1\le n \le N$, $\gamma_{n,t} \ne 0$, according to Theorem \ref{trm:main}, we have $\con_{n,t}=-\frac{\rmd}{\rmd t}\ldsrc$, suggesting $\left(\sum_{n=1}^N\frac{\alpha_n}{o^\trn_n-y^\trn_n}\right)\frac{\rmd}{\rmd t}\ldsrc=0$. For i.i.d. inputs $\{\rvz^\trn_n\}_{n=1}^N$, with probability 1, $\sum_{n=1}^N\frac{\alpha_n}{o^\trn_n-y^\trn_n}\ne0$, which means $\frac{\rmd}{\rmd t}\ldsrc=0$. This does not hold when the model is yet converged. Therefore, we have the property that $\exists 1\le n_0 \le N, \text{such that } \gamma_{n_0,t} = 0$.
\end{proof}
\vspace{-0.2cm}
\textit{Empirical Evidence.} In Figure \ref{fig:exp_prop3}, we visualize the dynamics of the $\gnd$ value satisfying $\con_{n,t} > 0$ throughout the learning process of Perceptron and Transformer. For Perceptron, the model dimension (128) is lower than the number of training examples (4096), which means the training dataset is redundant. Figure \ref{fig:exp_prop3_pcn} shows that, \textit{given the absence of the non-contributive examples}, a large fraction of $\vga_t$ still receives relatively small values before the model converges, which is caused by the redundancy of the training set.
In Figure \ref{fig:exp_prop3_trm}, we observe a similar phenomenon for Transformer, although the dimension of $\vtd$ is larger than the number of training instances. We suspect the reason is that the intrinsic dimension of Transformer is usually much smaller than the dimension of $\vtd$~\cite{intrinsic_dimension}, which leads to the redundancy of the training set. 

\subsection{Essence of Learning Acceleration}
\label{sec:scaling_law}
We investigate the essential improvement brought by the near-optimal learning policy in the perspective of the scaling laws of LMs~\cite{scaling_law}, which reveals a power law between the increase of training steps and the reduction of the test loss ($\ldsrd$) after a warming-up stage $t_0$\footnote{This requires the batch size to be sufficiently large~\cite{scaling_law}, which is satisfied in our experiments.}:
\begin{equation}
    \begin{aligned}
    \ldsrd = L_0 + {\left(\frac{B}{t}\right)}^{\beta}, \ t > t_0,
    \end{aligned}
    \label{eq:scaling_law}
\end{equation}
where $(B, \beta)$ are scaling law coefficients. $L_0$ contains the information of the model-size scaling and irreducible loss, and is assumed to be unaffected by the learning policy. In the following, we study the scaling properties of conventional and near-optimal learning processes.
\begin{figure}[t]
	\centering
	\includegraphics[width=0.7\textwidth]{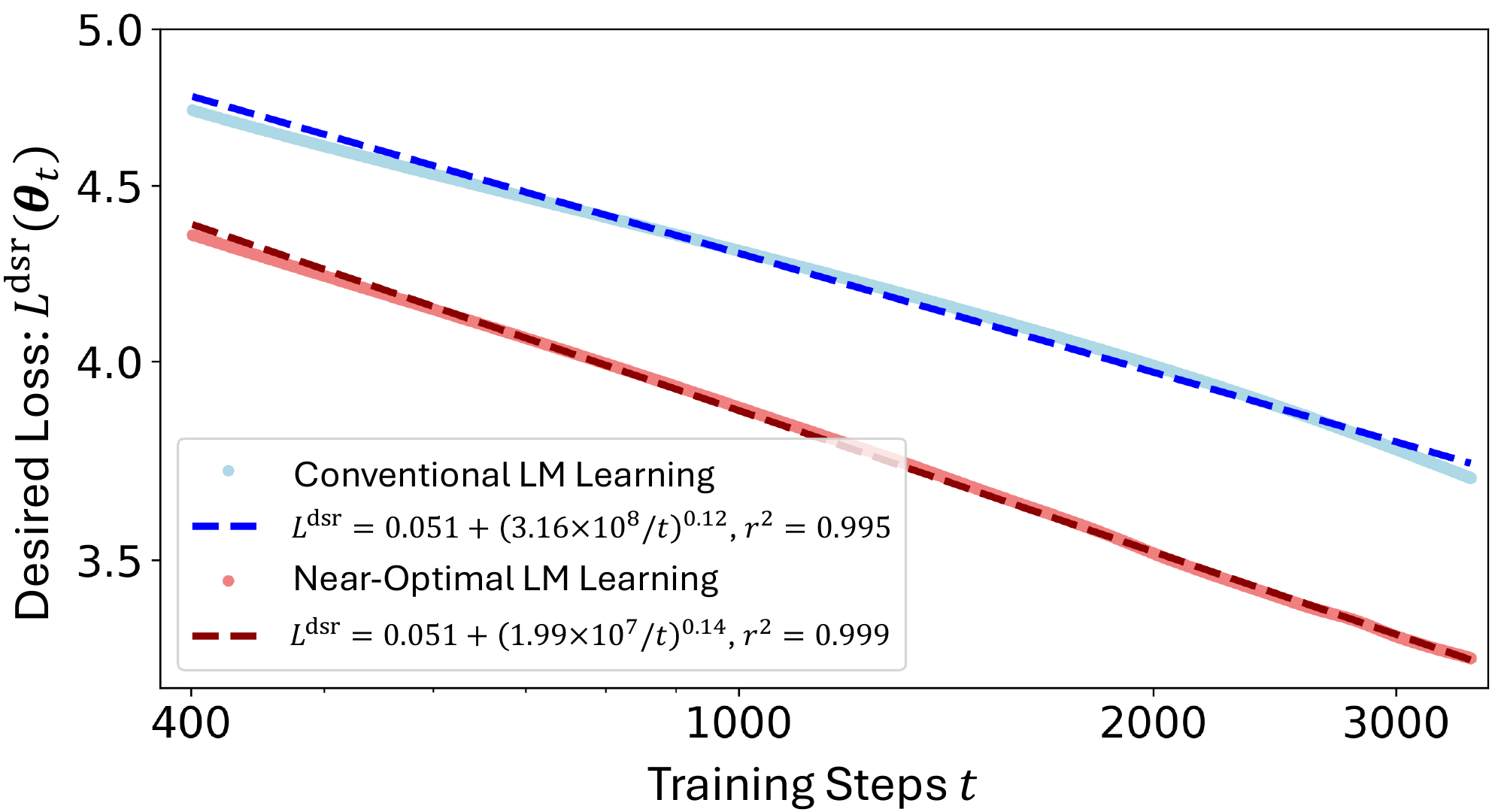}
            \caption{Illustration of the scaling law~\cite{scaling_law}: $\ldsrd = L_0 + (B/t)^\beta$ for conventional and near-optimal LM learning in Transformer language modeling. We fit the loss curves by the scaling law to obtain the correlation coefficient $r^2$ and show the loss curve (solid lines) together with the fit curve (dashed lines) in a log-log plot. The scaling law fits well for both conventional and near-optimal LM learning. The near-optimal LM learning essentially improves the coefficients $(B, \beta)$ in the scaling law by 96.6\% and 21.2\%, which shows great potential for speedup in training LLMs.}
    	\label{fig:exp_scaling}
\end{figure}
\begin{table}[t]
        \centering
        \begin{tabular}{rr|ccc}
        \toprule                
         $T$ & $N$   & $|\frac{\Delta B}{B}|$ (\%) & $|\frac{\Delta \beta}{\beta}|$ (\%) & $\mathrm{AR}$ \\ \midrule
         1K  & $2^{12}$  & 88.5          &  10.0                &  2.16                      \\
         2K  & $2^{13}$  & 94.9          &  18.0                &  2.31                      \\
         4K  & $2^{14}$ &  93.7          &  18.7                &  2.41                      \\
         8K  & $2^{15}$ &  94.8          &  19.0                &  2.48                      \\
        \bottomrule
        \end{tabular}
        \vspace{0.4cm}
        \caption{The improvements of the scaling law coefficients brought by the near-optimal learning policy for different total training steps ($T$) and data sizes ($N$) in Transformer language modeling. The vocabulary size increases with the growth of $N$ (see Appendix \ref{app:hp} for details). $\mathrm{AR}$ stands for the acceleration ratio as defined in Equation \ref{eq:ar}. The improvements hold for larger $T$ and $N$.}
        \label{tab:scaling_law}
\end{table}
\vspace{-0.1cm}
\paragraph{The near-optimal learning policy improves the scaling law coefficients of LMs.} 
In Figure \ref{fig:exp_scaling}, we fit the Transformer's loss curves induced by the conventional and near-optimal learning policies with Equation \ref{eq:scaling_law} by setting $t_0=400$ and $L_0=0.051$\footnote{In practice, we convert Equation \ref{eq:scaling_law} to $\ln (\ldsrd-L_0)=-\beta \ln t + \beta\ln B$, $(t > t_0)$ and perform linear regression. We search for $t_0$ and $L_0$ to get the highest correlation coefficients.}.
We observe that the near-optimal learning process still follows the scaling law, with $B$ and $\beta$ improved by 96.6\% and 21.2\% respectively.  Additionally, Table \ref{tab:scaling_law} shows that the improvement holds for the near-optimal policies found in the setting of larger $T$ and $N$.
We let $N$ grow with $T$ to ensure the sufficiency of training data ~\cite{chinchila}. The improvement of scaling law coefficients, especially $\beta$, provides significant potential in boosting the speed of LLM learning by taking advantage of power law growth. 
For two learning policies $\vga^{(1)}$ and $\vga^{(2)}$ which induce two loss curves $L^\dsr_{\vga^{(1)}}(\vtheta_{t})$ and $L^\dsr_{\vga^{(2)}}(\vtheta_{t})$ with two sets of scaling law coefficients $(B_1, \beta_1)$ and $(B_2, \beta_2)$, the acceleration ratio of $\vga^{(2)}$ over $\vga^{(1)}$ is:
\begin{equation}
    \begin{aligned}
    \mathrm{AR} = \frac{T}{\arg \min\limits_t \left\{L^\dsr_{\vga^{(2)}}(\vtheta_{t})\le L^\dsr_{\vga^{(1)}}(\vtheta_T)\right\}} = \frac{B_1^{\frac{\beta_1}{\beta_2}}}{B_2}T^{1-\frac{\beta_1}{\beta_2}}.
    \end{aligned}
    \label{eq:ar}
\end{equation}
For an LM pre-trained for 10M steps, we will obtain more than $9 \times$ acceleration at the end of the training if the scaling property of the LM is improved as in Figure \ref{fig:exp_scaling} and Table \ref{tab:scaling_law}.
Based on the recent experience in training LLMs~\cite{llama,llama2}, models are far from fully converged under the current training budget, which means small models (like 7B) have the potential to reach the performance of large models (like 65B), given enough training steps. 
However, according to Chinchilla's law~\cite{chinchila}, extending the training steps requires more computation than enlarging the model to achieve a certain performance. 
Therefore, by optimizing the learning policy to improve learning speed, the cost of training well-performed small models can be largely reduced, which is beneficial both for open-source endeavors in the LM research community and for the efficiency of industrial products. This indicates the promise and significance of designing practical learning policy optimization approaches, and our theory can be a valuable guide.

\section{Related Work}
\label{sec:related_work}

\paragraph{Improving the Learning Speed of Language Model.} There is a broad range of works that propose approaches to accelerate LM learning speed such as modifying model architectures~\cite{ln_study,layer_drop} or optimizers~\cite{lamb,sophia,grad_clipping}. There are also works studying the pre-training data programming to speed up LM convergence, such as data de-duplication~\cite{d4,semdedup}, domain mixture~\cite{doremi}, intrinsic task discovery~\cite{ppt}, and online data selection or re-ordering~\cite{skill-it,bilevel_training_dist,exp3_dp}, which can be viewed as special cases of optimizing learning policy. Unlike these works, we investigate the principles of optimizing LM learning in this paper.

\vspace{-0.2cm}

\paragraph{Language Modeling and Lossless Compression.} The recent success of LLMs calls for new interpretations beyond classical statistic learning theory for the fact that larger model sizes constantly cause better downstream generalization~\cite{double-descent,emergent}. One of the interpretations is to view the next-token-prediction training process of an LM as lossless data compression~\cite{nncp,trm_text_compress,jack_rae_compression}.
In this perspective, larger LMs have higher compression ratios, corresponding to better modeling of data generation regularities. It is worth noting that some recent works~\cite{llmzip,lm_is_compression} explore using well-trained LMs as compressors and thus the model sizes should be counted into the compressed data.
Unlike these works, viewing LM training as compression does not require including the model parameters in the compressed data (see Appendix \ref{app:orig_compression} for a constructive proof) and thus is more compatible with the model size scaling law of LMs~\cite{scaling_law}.


\vspace{-0.1cm}

\section{Discussion and Conclusion}
\paragraph{Summary.} In this work, we establish a theory for the optimal learning of LMs. We propose an objective that maximizes the compression ratio in an LM-training-as-losses-compression view. Then we derive a theorem, named \textit{Learning Law}, suggesting that all examples should be equally contributive to the LM in the optimal learning process, which is then validated by experiments in linear classification and real-world language modeling tasks. Finally, we empirically show that the optimal learning process essentially improves the scaling law coefficients of LMs, which sheds light on future works that design practical learning acceleration approaches.

\paragraph{Limitations.} One limitation of our work is that the experiments are conducted on relatively small scales. This is because our method to find the near-optimal learning policy corresponds to training a neural network with $L\times T$ layers, where $L$ is the layers of the LM and $T$ is the LM's total training steps (see Appendix \ref{app:policy_opt} for details). This leads to a high computational overhead when $L$ and $T$ scale up. However, since the theoretical derivation is generally applicable, we believe that our theory can be applied to LLMs. Another limitation is that our derivation assumes the LM is trained with full-batch GD, rather than some more commonly used techniques like mini-batch Adam~\cite{adam}. Since these methods are essentially gradient-based, our theory can still offer insights to future LM learning acceleration studies based on these techniques~\cite{tracin,tracein_lms}.

\paragraph{Future Work. } We believe that an important direction of future work is designing practical methods to find the optimal learning policies based on our theory for the large-scale training of LMs. Indeed, there are non-negligible challenges in this direction. Since the learning law provides a necessary condition for the learning policy's optimality, more regularization conditions may be required to prevent sub-optimal solutions. In addition, the approach to finding the optimal learning policy should be efficient enough without contributing much to the overall computation cost. Nevertheless, our work demonstrates the promise and potential of this direction. According to recent works on LLMs training~\cite{llama,llama2,mistral}, the losses are still far from convergence, which means that small models have the potential to reach the similar performance as large models, but are hindered by the computation overhead brought by the large total training steps. The optimal learning policy potentially brings about a large acceleration of training with the help of the power-law growth in Equation \ref{eq:ar}, which makes it possible to explore the limits of LMs given (inevitably) constrained computation and train a well-performed small LM that replaces current LLMs in practice.


\bibliography{law}
\bibliographystyle{alpha}


\clearpage
\appendix

\renewcommand{\algorithmicrequire}{\textbf{Input:}}
\renewcommand{\algorithmicensure}{\textbf{Output:}}

\section{Discussion of LM Training as Lossless Compression}
\subsection{Original View: Compressing the Training Data.} 
\label{app:orig_compression}

The idea of using an LM to compress data originates from the literature in the lossless text compression field~\cite{nncp,trm_text_compress}, and is recently adopted to interpret the essence of the next-token-prediction-based pre-training of LMs~\cite{jack_rae_compression}. We restate its core spirit by the following Theorem and the constructive proof from \cite{jack_rae_compression}:
\begin{theorem}
    \label{trm:compression}
    Consider an LM trained on a text corpus $\mathcal{D}$ with $M$ tokens using mini-batch next-token-prediction for one epoch. Let $B$ be the number of tokens in a batch and $L_t$ be the batch-averaged training loss at the time step $t$. Assume that $M$ is divisible by $B$. The training process can be viewed as lossless compression of the training data. The description length of the compressed data $\mathcal{C}$ is
    \begin{equation}
        \label{eq:desc_length}
        d(\mathcal{C}) = \sum_{t=1}^{M/B} B\cdot L_t + d(\mathrm{LM}),
    \end{equation}
    where $d(\mathrm{LM})$ is the length of the necessary code represented by a 0-1 string to run the LM training.
\end{theorem}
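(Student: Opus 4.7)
The plan is to give a constructive communication protocol between a sender who holds the corpus $\mathcal{D}$ and a receiver who holds nothing, showing that the corpus can be transmitted using exactly the number of bits claimed. First I would have both parties agree on the LM training code: the architecture, the deterministic initialization of parameters $\vtheta_0$, the optimizer, the batch size $B$, the vocabulary, and the ordering of examples within the corpus. Transmitting this program once, up front, costs $d(\mathrm{LM})$ bits by definition, and it lets the sender and receiver execute identical training trajectories provided they see the same tokens.

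Next I would describe a batch-by-batch encoding loop that is synchronized between the two parties. At the start of round $t$, both hold the identical parameter vector $\vtheta_{t-1}$ produced by executing the shared training code on the first $t-1$ batches. The sender then takes the next batch of $B$ tokens and, using the conditional next-token distributions that the LM at $\vtheta_{t-1}$ places on each position, encodes the batch with an arithmetic coder. The key fact I would invoke here is the standard entropy-coding guarantee: for any prediction distribution $p$ and true token $x$, arithmetic coding transmits $x$ in $-\log_2 p(x)$ bits up to an $O(1)$ overhead per message. Summed across the $B$ tokens of the batch, and identifying the batch-averaged cross-entropy training loss $L_t$ with the average negative log-likelihood that the model assigns the true tokens, the cost of encoding the batch is $B\cdot L_t$ bits (up to the usual constant overhead that is absorbed into $d(\mathrm{LM})$ or treated as negligible). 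The receiver decodes the batch with the same predictive distributions, obtains the same tokens, and then runs the same optimizer step on them, so both parties now share $\vtheta_t$ and the invariant is preserved.

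Then I would aggregate: iterating this for $t=1,\ldots,M/B$ transmits the entire corpus losslessly, and the total channel cost is $\sum_{t=1}^{M/B} B\cdot L_t$, added to the one-time $d(\mathrm{LM})$ cost of the shared program. This gives the claimed $d(\mathcal{C}) = \sum_{t=1}^{M/B} B\cdot L_t + d(\mathrm{LM})$. I would close by noting that since the protocol is invertible (the receiver reconstructs the tokens exactly), the compression is lossless, and since the description length equals the quantity in the theorem, the training trajectory itself \emph{is} a valid compression scheme for $\mathcal{D}$.

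The main obstacle will be pinning down the arithmetic-coding step rigorously: in particular, handling the $O(1)$ bit overhead per message, guaranteeing that the sender and receiver remain bit-identical despite finite-precision floating-point arithmetic in the LM forward/backward pass, and clarifying that ``description length'' is understood in the Kraft-inequality sense so that the cross-entropy bound translates into an actual code length. I would handle this by either appealing to the standard treatment in \cite{nncp,trm_text_compress}, which already addresses deterministic fixed-precision inference and idealized arithmetic coding, or by explicitly folding the constant per-batch overhead into the $d(\mathrm{LM})$ term, which is exactly how the informal statement in Equation~\ref{eq:desc_length} is meant to be interpreted.
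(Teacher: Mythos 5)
Your proposal is correct and follows essentially the same route as the paper's proof: a constructive encoder/decoder pair (your sender/receiver protocol) that interleaves arithmetic coding of each batch under the current model's predictive distribution with the shared, deterministic training update, so that the decoder reproduces the parameter trajectory by induction and the per-batch cost is $B\cdot L_t$. Your added attention to the per-message arithmetic-coding overhead and bit-exact reproducibility of the training step is a refinement the paper leaves implicit, not a different argument.
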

\begin{proof}
    The basic idea of the proof is to construct a lossless encoding and decoding process for $\mathcal{D}$ with the LM. Let $p_{\vtd}(\cdot | w_{<m})$ be the output distribution of the LM parameterized by $\vtd$ at the time step $t$, conditioning on the token prefix $w_{<m}=[w_{m-1}, w_{m-2}, \cdots, w_1]$. For simplicity, we assume that the LM is trained using mini-batch Stochastic Gradient Decent (SGD) with a learning rate $\eta$, where each batch is linearized to a continuous list of tokens. The batch-averaged training loss is $L_t=-\frac{1}{B}\sum_{m=1}^B \log p_{\vtd}(w_m | w_{<m})$\footnote{$\log (\cdot)$ stands for $\log_2(\cdot)$ in the following sections.}. The encoding the decoding process are described in Algorithm \ref{alg:enc} and \ref{alg:dec}.
    Basically, the main body of the algorithms other than the blue-colored parts implements the LM training.
    For encoding, the description length of a token $w_m$ is $-\log p_{\vtd}(w_m | w_{<m})$ according to Arithmetic Coding
    \footnote{\url{https://en.wikipedia.org/wiki/Arithmetic_coding}}
    , and thus the compressed length of a batch $\mathcal{W} = \{w_m\}_{m=1}^B$ is $\sum_{m=1}^B \left[-\log p_{\vtd}(w_m | w_{<m})\right]=B\cdot L_t$. $d(\mathcal{C})$ equals the sum of per-batch description lengths throughout the training plus the length of the code for LM training.
    Therefore, we get $d(\mathcal{C}) = B\cdot\sum_{t=1}^{M/B} \cdot L_t + d(\mathrm{LM})$.
    For decoding, since the code for LM training is the same as that in encoding, we have $\vtheta'_1=\vtheta_1$, and thus $w'_m=w_m$ for any steps in Algorithm $\ref{alg:dec}$, which can be easily proved by mathematical induction. As a result, $\mathcal{D}$ can be completely reconstructed from $\mathcal{C}$, indicating the encoding (compression) is lossless. 
\end{proof}

\begin{remark}
    The description length of the compressed data $d({\mathcal{C}})$ is approximately proportional to the area under the training loss curve ($\mathrm{AUC}$) when $M \gg 1$ because the size of LM training codes is much smaller than that of the compressed corpus and thus $d({\mathcal{C}})\approx \sum_{t=1}^{M/B} B\cdot L_t=B\cdot \mathrm{AUC}$.
\end{remark}

\begin{remark}
    Let $V$ be the vocabulary size of the LM and assume $M \gg 1$. The corresponding compression ratio of the learning process in Theorem \ref{trm:compression} is $\CR = \frac{M\log V}{d(\mathcal{C})} \approx \frac{M\log V}{\sum_{t=1}^{M/B} B\cdot L_t} \propto \frac{1}{\mathrm{AUC}}$. As the LM fits the data, we generally have $L_t < \log V$ because $\log V$ is the loss for a randomly initialized LM. This means the compression is valid, resulting in a compression ratio $\CR > 1$.
\end{remark}

Altogether, Theorem \ref{trm:compression} bridges a connection between data compression and LM training. Generally, a higher compression ratio indicates that the compression algorithm models the underlying data knowledge better and corresponds to a better performed LM, as stated in the following remark:
\begin{remark}
    \label{cor:compression}
    The LM's ability to model the knowledge in data is characterized by the corresponding lossless compression ratio of its learning process, which is inversely proportional to the loss AUC.
\end{remark}
Note that the model parameters are not included in the calculation of $d(\mathcal{C})$, and enlarging the model sizes typically reduces the loss AUC, which explains the remarkable performance of LLMs. In addition, $d(\mathcal{C})$ relates to the whole LM training process, not just the final loss. This is in line with the fact that larger LMs tend to perform better than smaller models, even if their final losses are the same~\cite{same_pt_better_down}. This observation supports the perspective that LM training can be conceptualized as a process of lossless data compression.

\begin{figure*}[t]

\begin{minipage}[t]{0.47\textwidth}
    \begin{algorithm}[H]
    \begin{algorithmic}
    \REQUIRE{Training corpus $\mathcal{D}$}
    \REQUIRE{The code for LM training as a 0-1 string}
    \ENSURE{Compressed data $\mathcal{C}$: list of 0-1 strings}
    \STATE Initialize $\mathcal{C}$ to an empty list
    \STATE Append the LM training code to $\mathcal{C}$
    \STATE Initialize the LM's parameters to $\vtheta_1$
    \FOR{$t \gets 1$ to $M/B$}
        \STATE Get a batch of tokens $\mathcal{W} = \{w_m\}_{m=1}^B$ from the training corpus $\mathcal{D}$
        \FOR{$m \gets 1$ to $B$, $w_m\in \mathcal{W}$}
            \STATE {\color{blue} Encode $w_m$ to a 0-1 string $s$ with Arithmetic Coding based on $p_{\theta_t}(\cdot | w_{<m})$}
            \vspace{0.09cm}
            \STATE {\color{blue} Append the 0-1 string $s$ to $\mathcal{C}$}
        \ENDFOR
        \STATE $L_t \gets -\frac{1}{B}\sum_{m=1}^B \log p_{\vtd}(w_m | w_{<m})$
        \STATE $\vtheta_{t+1} \gets \vtheta_{t} - \eta \nabla L_t$
    \ENDFOR
    \end{algorithmic}
    \caption{Encoding}
    \label{alg:enc}
\end{algorithm}
\end{minipage}
\hfill
\begin{minipage}[t]{0.52\textwidth}
    \vspace{0pt}
    \begin{algorithm}[H]
    \begin{algorithmic}
    \REQUIRE{Compressed data $\mathcal{C}$: list of 0-1 string}
    \ENSURE{Training corpus $\mathcal{D}$}
    \STATE Get the LM training code from the first string in $\mathcal{C}$
    \STATE Pop the first string from $\mathcal{C}$
    \STATE Initialize $\mathcal{D}$ to an empty list
    \STATE Initialize the LM's parameters to $\vtheta'_1$
    \FOR{$t \gets 1$ to $M/B$}
        \STATE Get a batch of 0-1 strings $\mathcal{S} = \{s_m\}_{m=1}^B$ from the compressed data $\mathcal{C}$
        \FOR{$k \gets 1$ to $B$, $s_m \in \mathcal{S}$}
            \STATE {\color{blue} Decode $w'_m$ from $s_m$ with Arithmetic Coding based on $p_{\theta'_t}(\cdot | w'_{<m})$}
            \STATE {\color{blue} Append the token $w'_m$ to $\mathcal{D}$}
        \ENDFOR
        \STATE $L_t \gets -\frac{1}{B}\sum_{m=1}^B \log p_{\vtheta'_t}(w'_m | w'_{<m})$
        \STATE $\vtheta'_{t+1} \gets \vtheta'_{t} - \eta \nabla L_t$
    \ENDFOR
    \end{algorithmic}
    \caption{Decoding}
    \label{alg:dec}
\end{algorithm}
\end{minipage}

\end{figure*}

\subsection{Our View: Compressing Data from the Desired Distribution.}
\label{app:our_compression}
Although we also focus on the loss AUC throughout the paper, our setting differs from that in Appendix \ref{app:orig_compression}:
(1) we assume the LM is trained with full-batch Gradient Descent (GD) for multiple epochs while Theorem \ref{trm:compression} lies in the scenario where the LM is trained with SGD for only one epoch;
(2) we consider $L^{\text{dsr}}$ computed on data other than the training examples, while $p_{\vtd}$ in Equation \ref{eq:desc_length} is computed on the training data.
However, although not entirely rigorous, we argue that Remark \ref{cor:compression} still holds despite the differences in (1) and (2).
The reason is that: \textbf{regarding (1)}, mini-batch SGD is an approximation of GD, which means they share the similar training dynamics when the batch size of SGD is large enough;
\textbf{regarding (2)}, just like the training loss AUC, the AUC of $L^{\text{dsr}}$ can be viewed as the description length of \textit{compressing examples from the desired data distribution} during the learning process. 
In this way, Remark \ref{cor:compression} indicates that minimizing the AUC of $L^{\text{dsr}}$ corresponds to optimizing the data compression on the desired distribution, which improves the LM's ability to model the desired data knowledge. 
This is more of practical concern because in most scenarios, the model performance is measured on a dataset other than the training set, such as the validation set in classical machine learning~\cite{nature_of_stat_learning}, the high-quality held-out corpus in large-scaling pre-training~\cite{scaling_law}, or the target set in domain adaption~\cite{data-selection-IS}.

\subsection{Perceptron Training as Lossless Compression}
Viewing model training as lossless compression stems from the next-token-prediction learning paradigm of LMs. 
We show that this perspective also fits in the one-epoch Maximum Likelihood Estimation (MLE) training of Perceptrons on the linear classification task, where the \textit{label of each example is compressed given the input vectors}. Specifically, the proof in Appendix \ref{app:orig_compression} still applies if we treat linear classification as a one-step language modeling with vocabulary size $V=2$.
Following the notation in Section \ref{sec:exp_setup}, for a Perceptron parameterized by $\vtd$ at the time step $t$, its probability of outputting $y$ conditioning on $\rvz$ is $p_{\vtd}(y|\rvz) = o^y(1-o)^{1-y}$, where $o=\sigmoid(\vtd \cdot \rvz)$. For a batch $\mathcal{B}_t = \{(z_n,y_n)\}_{n=1}^B$, the batch-averaged loss is $L_t = -\frac{1}{B} \sum_{n=1}^B \log p_{\vtd}(y_n|\rvz_n)$. With Algorithm \ref{alg:enc} and \ref{alg:dec} applied for encoding and decoding, the description length of the compressed $\mathcal{B}_t$ is $\sum_{n=1}^B\left[-\log p_{\vtd} (y_n|\rvz_n)\right] = B\cdot L_t$, which means Theorem \ref{trm:compression} still holds and the discussion in Appendix \ref{app:our_compression} also applies. For a dataset with $N$ examples in total, the compression ratio $\CR \approx \frac{N\log V}{\sum_{t=1}^{N/B} B\cdot L_t} = \frac{N}{\sum_{t=1}^{N/B} B\cdot L_t}$. For a randomly initialized $\vtheta_1$, $L_1\approx 1$, and as the model trains, $L_t \rightarrow 0$, indicating a valid data compressing process of compression ratio $\CR > 1$.

\label{app:perceptron_as_comp}

\section{Proof of Theorem \ref{trm:main}}
\label{app:derive}
Theorem \ref{trm:main} essentially reflects the property of the dynamics in the learning process induced by the optimal learning policy $\vga(t)$ for the problem defined in Equation \ref{eq:obj}.
We choose the accumulation of each $\gnc$ over time: $\Gnc = \int_0^t \gamma_n(t') \rmd t'$, as a set of free variables that $\vtc$ depends on to solve the optimization problem. In this way, the problem is simplified by considering a scalar that summarizes ``how much'' an example is used for training until $t$, rather than the whole trajectory of $\gnc$. As such, $\dot{\Gamma}_n(t) = \frac{\rmd }{\rmd t}\Gnc=\gnc$ and Equation \ref{eq:obj} becomes:
\begin{equation}
    \begin{aligned}
    \min_{\vGamma(t),\vga(t)} \ \ & \int_0^T L^{\text{dsr}}(\vtheta_{\vGamma, \vga}(t)) \rmd t, \\
    \text{s.t.} \ \ 
    & \sum_{n=1}^N \dot{\Gamma}_n(t) = 1, \\
    & \dot{\Gamma}_n (t) \ge 0, n=1,2,\cdots,N,
    \label{eq:obj2}
    \end{aligned}
\end{equation}
where $\vGamma(t) = \left[\Gamma_1(t),\Gamma_2(t),\cdots,\Gamma_N(t)\right]^\top$, and $\vtheta_{\vGamma,\vga}(t)$ is an alias of $\vtc$ to show its dependency on $\vGamma$ and $\vga$. Let $\mathcal{L}$ be the Lagrangian depending on $\{\Gnc\}_{n=1}^N$ and $\{\dot{\Gamma}_n(t)\}_{n=1}^N$:
\begin{equation}
 \begin{aligned}
    \mathcal{L} = \simpldsrc + \lambda (t) (\sum_{n=1}^N \dot{\Gamma}_n(t) - 1) + \sum_{n=1}^N \mu_n(t) \dot{\Gamma}_n (t),
\end{aligned}
\label{eq:lag2}
\end{equation}
where $\lambda(t)$ and $\mu_n(t)$ are Lagrange multipliers and $\simpldsrc = L^{\text{dsr}}(\vtheta_{\vGamma, \vga}(t))=\ldsrc$. To achieve the optimum of Equation \ref{eq:obj2}, $\mathcal{L}$ should satisfy the Euler-Lagrange (EL) Equation~\cite{el_equation}:
\begin{equation}
    \begin{aligned}
    \frac{\rmd }{\rmd t}\frac{\partial \mathcal{L}}{\partial \dot{\Gamma}_n} - \frac{\partial \mathcal{L}}{\partial \Gn} = 0.
    \end{aligned}
\end{equation}
Together with other constraints in the Karush–Kuhn–Tucker (KKT) conditions~\cite{kkt}, we get the following formulas that characterize the optimum of the Equation \ref{eq:obj2}:
\begin{equation}
    \label{eq:kkt}
    \left\{
    \begin{aligned}
    \frac{\partial \simpldsrc}{\partial \Gn} &= \dot{\lambda}(t)+\dot{\mu}_n(t), \\
    \sum_{n=1}^N \dot{\Gamma}_n(t) &= 1, \\
    \dot{\Gamma}_n(t) &\ge 0, \\
    \mu_n(t) &\ge 0, \\
    \mu_n(t)\dot{\Gamma}_n(t) &= 0.
    \end{aligned}
    \right.
\end{equation}
Note that we only consider the E-L Equations on $\{\Gnc\}_{n=1}^N$ and $\{\dot{\Gamma}_n(t)\}_{n=1}^N$, part of the free variables in the original problem, which also depends on the $\vga(t)$ trajectory. Since KKT conditions are necessary conditions to the optimization problem, Equation \ref{eq:kkt} is also necessary. In the following, we simplify Equation \ref{eq:kkt} to reach Theorem \ref{trm:main}.

\paragraph{Simplifying Equation \ref{eq:kkt}.} We study the examples with non-zero weights during training. For $\gnc=\dot{\Gamma}_n(t)>0$ we have $\mu_n(t)=0$ according to $\mu_n(t)\dot{\Gamma}_n(t) = 0$ in Equation \ref{eq:kkt} and the following Lemma bridges a connection between $\mu_n(t)$ and $\dot{\mu}_n(t)$:
\begin{lemma}
    \label{lem:zero}
    For $\mu_n(t)\in C^1[0,T]$, $\mu_n(t)=0 \Rightarrow \dot{\mu}_n(t)=0$ at the specific time step $t$.
\end{lemma}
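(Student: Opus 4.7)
The statement as written ("$\mu_n(t)=0 \Rightarrow \dot{\mu}_n(t)=0$") is false for an arbitrary $C^1$ function (take $\mu_n(t)=t-c$), so the plan is to invoke the sign constraint from the KKT conditions \eqref{eq:kkt}, namely $\mu_n(t)\ge 0$ for all $t\in[0,T]$. Under this nonnegativity, the lemma reduces to the classical Fermat-style fact that a $C^1$ function attaining its minimum has vanishing derivative at the minimizer.

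\textbf{Key steps.} First I would remind the reader that, from the KKT system \eqref{eq:kkt}, every multiplier $\mu_n$ is a pointwise nonnegative $C^1$ function on $[0,T]$. Next, fix a time $t_0$ at which $\mu_n(t_0)=0$; by nonnegativity this is a global minimum of $\mu_n$ on $[0,T]$, so for any admissible increment $h$ (with $t_0+h\in[0,T]$) one has $\mu_n(t_0+h)-\mu_n(t_0)\ge 0$. Dividing by $h$ and letting $h\downarrow 0$ and $h\uparrow 0$ gives, using $C^1$ regularity, $\dot\mu_n(t_0)\ge 0$ and $\dot\mu_n(t_0)\le 0$ respectively, hence $\dot\mu_n(t_0)=0$. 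At an interior point both one-sided limits exist and agree; at an endpoint the one-sided derivative is forced to be zero by the same argument combined with the $C^1$ extension of $\mu_n$ (or one simply restricts the lemma to interior times, which is what the proof of Theorem~\ref{trm:main} actually needs after $\nabla L\cdot\nabla l_n$ is computed via \eqref{eq:kkt}).

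\textbf{Expected obstacle.} The only delicate point is handling the boundary $t\in\{0,T\}$, where only a one-sided derivative is available; this is why the proof must either restrict to $t\in(0,T)$ or appeal to a $C^1$ extension. Beyond that the argument is a direct application of Fermat's theorem combined with the KKT sign constraint, so no further analytic machinery is required before plugging $\mu_n=\dot\mu_n=0$ back into the first line of \eqref{eq:kkt} to obtain the common-contribution identity in Theorem~\ref{trm:main}.
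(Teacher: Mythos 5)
Your argument is essentially the paper's: both proofs hinge on the KKT sign constraint $\mu_n(t)\ge 0$ and the $C^1$ regularity, the paper phrasing it as a contradiction (if $\dot\mu_n(t_0)\ne 0$ then $\mu_n$ dips negative just to one side of $t_0$) and you phrasing it as Fermat's interior-minimum theorem, which are the same first-order optimality argument. Your explicit handling of the endpoints $t\in\{0,T\}$ is a small point of extra care that the paper's one-sided contradiction glosses over; otherwise the two proofs coincide.
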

\begin{proof}
    Assuming $\exists t_0\in [0,T]$, s.t. $\mu_n(t_0)=0$ but $\dot{\mu}_n(t_0)\ne 0$, we let $\dot{\mu}_n(t_0)> 0$ without loss of generality. Then $\exists \delta t >0$, s.t. $\mu_n(t_0-\delta t) < 0$ according to $\mu_n(t) \in C^1[0,T]$, which contradicts $\mu_n(t) \ge 0$ in Equation \ref{eq:kkt}. Therefore, we have $\dot{\mu}_n(t_0) = 0$.
\end{proof}
As such, $\gnc > 0 \Rightarrow \mu_n(t)=0 \Rightarrow \dot{\mu}_n(t)=0$ (Lemma \ref{lem:zero}), which means:
\begin{equation}
\label{eq:temp_main}
\frac{\partial \simpldsrc}{\partial \Gamma_m} =
\frac{\partial \simpldsrc}{\partial \Gn} = \dot{\lambda}(t), \text{  for } \gamma_m(t) > 0 \text{ and } \gnc > 0.
\end{equation}
Note that $\dot{\lambda}(t)$ is independent of $m$ and $n$. Equation \ref{eq:temp_main} already resembles Equation \ref{eq:main} in their formats, if we have $\frac{\partial \simpldsrc}{\partial \Gn} \propto \nabla L \cdot \nabla l_n$, where $\nabla L = \nabla \ldsrc$ and $\nabla l_n = \nabla l(x_n^\trn, \vtc)$.

\paragraph{Interpreting $\bm{\frac{\partial \simpldsrc}{\partial \Gn}}$.} $\frac{\partial \simpldsrc}{\partial \Gn}$ measures how the change of $\Gnc$ influence the change of $\simpldsrc$ at the time step $t$ when other free variables are fixed. Specifically, if $\Gnc$ changes by a small value $\Gnc \rightarrow \Gnc + \Delta \Gnc$, then $\simpldsrc$ correspondingly changes by a small value $\simpldsrc \rightarrow \simpldsrc + \Delta \simpldsrc$, and $\frac{\partial \simpldsrc}{\partial \Gn} = \frac{\Delta \simpldsrc}{\Delta \Gnc}$. Then, we consider $\frac{\rmd \simpldsrc}{\rmd t}$ with Equation \ref{eq:continuous}:
\begin{equation}
    \label{eq:loss_dyna}
    \begin{aligned}
    \frac{\rmd \simpldsrc}{\rmd t} &= \nabla \ldsrc \cdot \frac{\rmd \vtc}{\rmd t} \\
    &= - \sum^{N}_{n=1}\gamma_{n}(t) \nabla \ldsrc \cdot \nabla l(x_n^\trn, \vtc) \\
    &= - \sum^{N}_{n=1}\frac{\rmd \Gnc }{\rmd t} \nabla L \cdot \nabla l_n.
    \end{aligned}
\end{equation}
As a result, for a small $\Delta t$, we have:
\begin{equation}
    L^{\text{dsr}}(t+\Delta t) - \simpldsrc = - \sum^{N}_{n=1}\left[\Gn(t+\Delta t) - \Gnc\right] \nabla L \cdot \nabla l_n.
\end{equation}
Now we consider the change of $L^{\text{dsr}}(t)$ and $\Gn$ at $t+\Delta t$. Since $\nabla L \cdot \nabla l_n$ is computed at the time step $t$, it is not affected by the variants. Therefore, we get:
\begin{equation}
    \begin{aligned}
        \Delta L^{\text{dsr}}(t+\Delta t) = -\Delta \Gn(t+\Delta t) \nabla L \cdot \nabla l_n,
    \end{aligned}
\end{equation}
When $\Delta t \rightarrow 0$, $\frac{\Delta L^{\text{dsr}}(t+\Delta t)}{\Delta \Gn(t+\Delta t)} \rightarrow \frac{\Delta \simpldsrc}{\Delta \Gnc} = \frac{\partial L^{\text{dsr}}}{\partial \Gn}$, which means:
\begin{equation}
    \label{eq:if}
    \frac{\partial \simpldsrc}{\partial \Gn} = - \nabla L \cdot \nabla l_n.
\end{equation}
By substituting Equation \ref{eq:if} into Equation \ref{eq:temp_main}, we obtain that for the $m^{\text{th}}$ and $n^{\text{th}}$ training examples satisfying $\gamma_m(t) > 0$ and $\gnc > 0$ the following equation holds:
\begin{equation}
    \label{eq:temp_main_2}
     \nabla L \cdot \nabla l_m = \nabla L \cdot \nabla l_n = -\dot{\lambda}(t) = \Const,
\end{equation}
where $\Const$ stands for ``a constant independent of $m$ and $n$''. Equation \ref{eq:temp_main_2} is essentially equivalent to Equation \ref{eq:main}.

\paragraph{Proving $\bm{\Const = -\frac{\rmd L^\dsr(t)}{\rmd t}}$.} By substituting $\nabla L \cdot \nabla l_n$ with $\Const$ in Equation \ref{eq:loss_dyna}, we get:
\begin{equation}
    \begin{aligned}
    \frac{\rmd L^\dsr(t)}{\rmd t} &= - \sum^{N}_{n=1}\frac{\rmd \Gnc }{\rmd t} \cdot \Const, \\
    &= -\Const \sum_{n=1}^N \gnc, \\
    &= -\Const.
    \end{aligned}
    \label{eq:const}
\end{equation}
As such, by combining Equation \ref{eq:temp_main_2} with Equation \ref{eq:const}, we complete the proof of Theorem \ref{trm:main}.

\begin{figure}[t]
    \centering
    \includegraphics[width=0.4\textwidth]{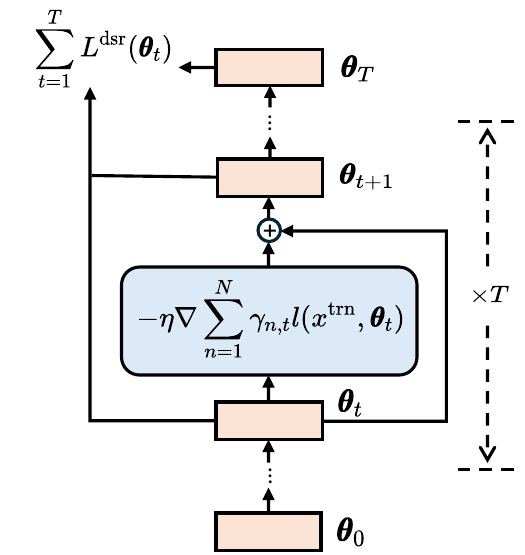}
    \caption{The architecture of the equivalent neural network to find the optimal learning policy, Each layer consists of the gradient update and a residual connection.}
    \label{fig:opt_gamma}
\end{figure}

\section{Details of Learning Policy Optimization}
\label{app:policy_opt}
In Section \ref{sec:method}, we search for the optimal learning policy by Proximal Gradient Decent~\cite{prox_gd}. Specifically, we view the whole learning process in $0 \le t \le T$ as a neural network with $T$ layers parameterized by $\vga = [\vga_{0}, \cdots, \vga_{t-1}] \in \mathbb{R}^{N \times T}$. As illustrated in Figure \ref{fig:opt_gamma}, each layer of the network consists of the gradient update function and a residual connection~\cite{res_net}, where the ``hidden states'' are $\vtd$. Then, we adopt Backward Propagation (BP;~\citealp{bp}) to compute $\nabla_{\vga_t} J(\vga)$ in Equation \ref{eq:method}. The backward operation at each layer is:
\begin{equation}
    \begin{aligned}
    \frac{\partial J}{\partial \vga_t} &= -\eta\sum_{t'=t+1}^T \nabla L^{\dsr}(\vtheta_{t'})
    \frac{\partial \vtheta_{t'}}{\partial \vtheta_{t+1}} \rmG^{\trn}(\vtd)
    \\
    \frac{\partial \vtheta_{t'}}{\partial \vtheta_{t+1}} &= \frac{\partial \vtheta_{t'}}{\partial \vtheta_{t+2}} \left[\mI - \eta \rmH^\trn(\vtheta_{t+1})\right],
    \end{aligned}
\end{equation}
where $\rmG^\trn(\vtd)=\left[\nabla l(x_1^\trn, \vtd), \cdots, \nabla l(x_N^\trn, \vtd)\right] 
$
, $\mI$ is the identity matrix, and $\rmH^\trn(\vtheta_{t+1})$ is the Hessain matrix of $L^\trn(\vtheta)$ at $\vtheta = \vtheta_{t+1}$. We implement the BP operations with dynamic programming and Jacobian-Vector-Product\footnote{\url{https://pytorch.org/docs/stable/func.api.html}} in PyTorch~\cite{pytorch} for efficiency. To reduce the single-device GPU memory use, we also implement an activation partition algorithm inspired by ZeRO-2~\cite{zero}, where the ``hidden states'' $\vtd$ in one model are stored in different GPU devices.

\section{Hyper-Parameter Configurations}
\label{app:hp}

\paragraph{Perceptron Linear Classification.} Following the teacher-setting described in Section \ref{sec:exp_setup}, we use $D=128$ and $\rmT$ is randomly drawn from a Gaussian distribution $\rmT \sim \mathcal{N}(\bm{0}, \sqrt{D}\mI)$. We generate $N=4096$ training inputs $\rvz^\trn$ from $\mathcal{N}(\bm{0}, 3\mI)$, and $M=512$ target inputs $\rvz^\dsr$ from $\mathcal{N}(0.5\mathbf{1}, \mI)$, where $\mathbf{1}=[1,1,\cdots,1]^\top \in \mathbb{R}^D$. For each learning policy, we initialize $\gamma_{n,0}=\frac{1}{N}$ and train the Perceptron with $\eta=0.1$ for $T=2000$ time steps, which is sufficient for the model to converge. For learning policy optimization, we initialize the learning policy to the constant policy $\gamma^c_{n,t}=\frac{1}{N}$, setting $\epsilon=5\times 10^{-6}$ and train the network for 500 epochs.

\paragraph{Transformer Language Modeling.} We conduct experiments based on a two-layer Transformer with 128 hidden dimensions and 8 attention heads. For all experiments except that in Table \ref{tab:scaling_law}, we randomly sample $N$ = 16,384 examples as $x_n^\trn$ and $K$ = 512 examples as $x_k^\dsr$ with the max sequence length 64 from the TinyStories~\cite{tinystories} corpus\footnote{\url{https://huggingface.co/datasets/roneneldan/TinyStories/tree/main}}. 
We use the BPE tokenizer of Mistral~\cite{mistral} and construct a vocabulary with 5K tokens by mapping the infrequent tokens to [UNK]. The model contains about 1.7M parameters. To reflect the difference between the training and desired distribution, we add perturbations to 50\% training sentences by iteratively applying one of the following operations 20 times: 
(1) replacing one token with a random token in the vocabulary~\cite{top_p}; 
(2) deleting the last token~\cite{over_smoothing}; 
(3) repeating one token in a sentence~\cite{unlikelihood_training}. 
This corresponds to the fact that the large-scale pre-training corpus tends to be more noisy than the desired set (the carefully curated held-out corpus or high-quality downstream data) to evaluate the model generalization performance in practice. We set $\eta=0.1$, $T=4,000$, $\gamma_{n,0} = \frac{1}{N}$ for each learning policy. We start from the constant policy and optimize the learning policy for 15 epochs using $\eta=0.1, 0.2, 0.4$. $\eta=0.4$ yields the lowest loss at the end of the training. Therefore, we only plot the optimization process for $\eta=0.4$ in Figure \ref{fig:exp_opt_trm} and \ref{fig:exp_train_trm}. For experiments in Table \ref{tab:scaling_law}, we vary the total training steps and the corresponding training data sizes and simultaneously, change the vocabulary sizes to adapt to different data sizes. We use vocabulary sizes 4K, 4.5K, 5K, and 6K for $N=2^{12}, 2^{13}, 2^{14}, $ and $2^{15}$, respectively.

\end{document}